\documentclass[11pt]{article}

\usepackage{fullpage}
\usepackage{amsmath,amssymb,amsthm}
\usepackage{mathtools}
\usepackage{booktabs}
\usepackage{tikz}
\usetikzlibrary{arrows.meta}
\usepackage[colorlinks=true,linkcolor=blue,citecolor=blue,urlcolor=blue]{hyperref}
\usepackage[font={small,it}]{caption}

\theoremstyle{plain}
\newtheorem{theorem}{Theorem}[section]
\newtheorem{lemma}[theorem]{Lemma}
\newtheorem{proposition}[theorem]{Proposition}
\newtheorem{corollary}[theorem]{Corollary}
\theoremstyle{definition}
\newtheorem{definition}[theorem]{Definition}

\theoremstyle{remark}
\newtheorem{remark}[theorem]{Remark}

\DeclareMathOperator{\tr}{tr}
\DeclareMathOperator{\Reg}{Reg}
\DeclareMathOperator{\range}{range}
\DeclareMathOperator{\spn}{span}

\DeclareMathOperator*{\argmax}{arg\,max}
\DeclareMathOperator*{\argmin}{arg\,min}
\newcommand{\R}{\mathbb{R}}

\newcommand{\Ball}{\mathbb{B}}

\newcommand{\wstar}{w^{\ast}}
\newcommand{\xstar}{x^{\ast}}

\newcommand{\ip}[2]{\langle #1, #2\rangle}

\title{\vspace{-1.5cm}\bf Efficient Online Inverse Optimization with $O(d)$
Regret\thanks{The main result was entirely obtained by Cogentic, an agentic framework for mathematical discovery, using Gemini 3.1 Pro as the base model. There was no human interaction other than the initial \href{https://drive.google.com/file/d/1fCq4ERodRruPHnLQLf7bsSfA3gnGB_2A/view}{prompt} for cogentic to generate a \href{https://drive.google.com/file/d/1CzXV9MPSkg_nqrCusJBLjZaWDjNJGzNW/view?usp=drive_link
}{full proof}. The authors contextualized the findings, verified the proofs, and extended the applications. The full exposition here is due to the authors aided by different AI models.

The Cogentic harness was built by Yang Cai, Vineet Gupta, Yanchen Jiang,
Christopher Liaw, Aranyak Mehta, Grigoris Velegkas and Di Wang. The problem was proposed by Anupam Gupta, Guru Guruganesh and Renato Paes Leme, who also led the exposition, investigated provenance of claims and extended to further applications beyond the basic result.

The following authors have additional affiliations beyond Google Research. Yang Cai (Yale University), Anupam Gupta (New York University) and Vineet Gupta (Google DeepMind).
}}

\author{
\parbox{0.95\textwidth}{\centering
Yang Cai,
Anupam Gupta,
Vineet Gupta,
Guru Guruganesh,
Yanchen Jiang,
Christopher Liaw,
Aranyak Mehta,
Renato Paes Leme,
Grigoris Velegkas,
Di~Wang
}\\[1.5em]
\normalsize Google Research
}
\date{}

\begin{document}
\maketitle
\vspace{-1.2em}


\begin{abstract}
We give a deterministic algorithm for online inverse linear optimization with regret $O(d)$, uniform
in the horizon and $O(d^{2})$ time per round. A bound of this order was obtained recently by
Dewasurendra, settling a question of Gollapudi et al.\ and of Oki and Sakaue, but by an improper
rule that enumerates covers at every scale and costs $T^{\Theta(d)}$ a round; ours is the first
efficient such bound and the first proper one. We build on the variable-metric framework of Sakaue
et al., adding a self-normalized rank-one update, and we replace the $\log\det$ potential by the
trace power $\tr(H^{-1/2})$, which is bounded outright and removes the $\ln T$. The bound also holds
against an expert that does not optimize, and we give corruption-robust and rank-adaptive variants,
and an application to convex minimization.
\end{abstract}

\section{Introduction}\label{sec:intro}

A medical resident learns by watching the decisions of an expert attending physician. Each patient
presents a set of available interventions --- which drug, whether to operate, whether to wait ---
and the attending chooses one. What the attending is optimizing is largely tacit: an accumulated
weighting of efficacy against side effects, of recovery time against cost. The resident never sees
that criterion, only the choice, after the fact, patient after patient. This is the problem of
online inverse linear optimization.

We model the criterion as a fixed vector $\wstar$: interventions are described by vectors of
attributes, those available for the $t$-th patient forming a set $X_t\subseteq\R^d$. The resident
recommends an intervention and then sees the choice of the attending, who selects one maximizing
$\ip{\wstar}{\cdot}$ --- and nothing else: not $\wstar$, not the value of any intervention, not even
how much worse the recommendation was. This motivates the following optimization problem.

\smallskip
\noindent\textit{Online inverse linear optimization.} Fix a hidden $\wstar\in\Ball$. For
$t=1,2,\dots$:
\begin{enumerate}
\itemsep0pt \parskip0pt \topsep2pt
\item a compact nonempty action set $X_t\subseteq\Ball$ is revealed, chosen adversarially as a
      function of the history;
\item the learner recommends an action $\hat x_t\in X_t$;
\item the expert takes $x_t\in\argmax_{x\in X_t}\ip{\wstar}{x}$, which the learner then observes.
\end{enumerate}
\smallskip

Here $\Ball$ is the unit ball of $\R^d$; the normalizations fix the scale. We measure performance by
the cumulative shortfall of the recommendations, in the expert's own currency:
\begin{equation}\label{eq:regret}
  R_T\;:=\;\sum_{t=1}^{T}\ip{\wstar}{x_t-\hat x_t}\;\ge\;0,
\end{equation}
where each summand is nonnegative by optimality of $x_t$. A learner with small $R_T$ recommends
about as well as the expert without ever identifying $\wstar$, which is anyway out of reach:
objectives inducing the same choices are never separated by data.

What matters is the dependence on the horizon $T$. A bound of $O(\sqrt T)$, or even $O(d\log T)$,
makes the \emph{average} shortfall vanish but lets the total grow without bound: over a long enough
career the resident keeps paying. A bound \emph{uniform in $T$} says instead that the total damage
ever done is finite --- mistakes are not merely rare but exhausted. That is the guarantee we are
after, with a constant polynomial in $d$.

One more distinction will matter. Call a learner \emph{proper} if it commits to an objective before
it sees the menu: at the start of round $t$ it holds an estimate $\hat w_t\in\R^d$, $\hat w_t\neq0$,
of $\wstar$, and once $X_t$ arrives it recommends
\[
  \hat x_t\;\in\;\argmax_{x\in X_t}\ip{\hat w_t}{x}.
\]
A proper learner thus answers not only ``what should be done for this patient'' but ``what do I take
the attending to be optimizing''; its recommendation is by construction optimal for some objective,
and producing it costs one linear optimization over $X_t$ and nothing else. A learner is
\emph{improper} if $\hat x_t$ may be any function of $X_t$ and the history, with no objective behind
it.

\begin{table}[t]
\centering\scriptsize
\begin{tabular}{@{}lllll@{}}
\toprule
Method & Regret & Uniform in $T$ & Proper & Per-round cost \\
\midrule
B\"armann et al.\ \cite{Barmann17,Barmann20} (OGD/MWU) & $O(\sqrt T)$ & no & yes & $O(\tau_{\mathrm{sol}}+d)$ \\
Besbes et al.\ \cite{Besbes21,Besbes25} (circumcenter) & $O(d^{4}\ln T)$ & no & yes & $\mathrm{poly}(d,T)$ \\
Gollapudi et al.\ \cite{GGKMPS} (centroid, scale $1/T$) & $O(d\ln T)$ & no & yes & est.\ $O(\tau_{\mathrm{sol}}+d^{5}T^{3})$ \\
Gollapudi et al.\ \cite{GGKMPS} (John ellipsoid) & $\exp(O(d\ln d))$ & \textbf{yes} & yes & $\mathrm{poly}(d,T)$ \\
Sakaue et al.\ \cite{SakaueONS} (online Newton step) & $O(d\ln T)$ & no & yes &
  $O(\tau_{\mathrm{sol}}+d^{2}+\tau_{\mathrm{proj}})$ \\
Sakaue \cite{SakaueSOP} (second-order perceptron) & $O(d\ln T)$ & no & yes & $O(\tau_{\mathrm{sol}}+d^{2})$ \\
Dewasurendra \cite{Dewasurendra26} (multiscale vote) & $O(d)$ & \textbf{yes} & no & $T^{\Theta(d)}$ \\
\midrule
\textbf{This paper} & $\mathbf{O(d)}$ & \textbf{yes} & \textbf{yes} & $O(\tau_{\mathrm{sol}}+d^{2})$ \\
\addlinespace
Lower bound (Theorem~\ref{thm:lower}; cf.\ \cite{Dewasurendra26,SakaueONS}) & $\Omega(\sqrt d)$ & --- & --- & --- \\
\bottomrule
\end{tabular}
\caption{Online inverse linear optimization; $\tau_{\mathrm{sol}}$ is one linear optimization and
$\tau_{\mathrm{proj}}$ one Mahalanobis projection onto the domain, $O(d^{3})$ for the ball
\cite{SakaueONS}. The count $T^{\Theta(d)}$ is from \cite{Dewasurendra26} itself; there the
per-round work is the maximization of a nonconvex vote over $X_t$, for which a linear optimization
oracle does not suffice.}
\label{tab:oilo}
\vspace{-1em}
\end{table}

\paragraph{History.} The online formulation is due to B\"armann, Pokutta and Schneider
\cite{Barmann17,Barmann20}, who analyzed online gradient descent and multiplicative weights and
obtained $O(\sqrt T)$. The same protocol was studied independently, under the name \emph{contextual
recommendation}, by Gollapudi, Guruganesh, Kollias, Manurangsi, Paes Leme and Schneider
\cite{GGKMPS}. Besbes, Fonseca and Lobel \cite{Besbes21,Besbes25} gave a circumcenter rule with
$O(d^{4}\ln T)$. Gollapudi et al.\ obtained $O(d\ln T)$ from a regularized center of gravity and
$\exp(O(d\ln d))$ from the John ellipsoid, the latter uniform in $T$ but exponential in $d$. Sakaue,
Tsuchiya, Bao and Oki \cite{SakaueONS} obtained $O(d\ln T)$ efficiently by an online Newton step,
together with an $\Omega(\sqrt d)$ lower bound for every horizon (Table~\ref{tab:oilo}). Sakaue
\cite{SakaueSOP} then removed the Mahalanobis projection that step requires, reaching the same
$O(d\ln T)$ at $O(d^{2})$ per round by a second-order perceptron update. Two further
results are finite and uniform in $T$ under extra assumptions:
\cite{SakaueFY} needs a value gap separating the optimal action from the rest in every round, and
\cite{OkiSakaue} needs every $X_t$ to be $\mathrm{M}$-convex, a discrete-convexity condition met by
matroid families but not in general. For \emph{arbitrary} action sets the choice has been between
$O(d\log T)$, polynomial but growing with the horizon, and $\exp(O(d\log d))$, uniform in $T$ but
exponential in $d$; whether a finite $\mathrm{poly}(d)$ bound is achievable was left open in both
\cite{GGKMPS} and \cite{OkiSakaue}. Every rule named so far is proper.

That question was answered recently by Dewasurendra
\cite{Dewasurendra26},\footnote{The same paper proves an $\Omega(d)$ lower bound, stated against the
entropy exponent rather than the Euclidean normalization used here: its hard instance takes
$\wstar\in\{-1,1\}^{d}$, so $\|\wstar\|=\sqrt d$, and rescaling it into $\Ball$ turns the bound into
$\sqrt d/2$, the same $\Omega(\sqrt d)$ as Theorem~\ref{thm:lower}.} by a rule that is neither proper
nor efficient. At every dyadic scale he covers the class of optimality-gap functions, turning each
cover element into a tolerant test that accepts or rejects an action; the tests at all scales are
pooled into one weighted vote, the learner plays the action the vote likes best, and the expert's
choice then drops the tests rejecting it and promotes those separating it from the recommendation.
A Dudley entropy integral bounds the shortfall by $O(d)$. The vote carries $T^{\Theta(d)}$ tests and
is nonconvex over $X_t$, so a round costs $T^{\Theta(d)}$ and no linear optimization oracle helps;
and its maximizer need not maximize any $\ip{w}{\cdot}$, so the recommendation carries no estimate
of the objective.

Online inverse optimization is the online face of an older problem --- recover an objective from
observed optimal decisions --- initiated by Burton and Toint \cite{BurtonToint} for shortest paths
and cast in general linear-programming form by Ahuja and Orlin \cite{AhujaOrlin}; see
\cite{ChanSurvey} for a survey. The offline literature turns largely on the choice of loss --- KKT residuals
\cite{Keshavarz11}, equilibrium losses \cite{Bertsimas15}, suboptimality \cite{MohajerinEsfahani18}
and incenter \cite{Zattoni25} costs --- and on noisy \cite{Aswani18} or partially specified
\cite{Ren25} data, with statistical rates only recently established \cite{Fatemi26}; applications
run from control \cite{Akhtar21} and routing \cite{ZattoniRouting} to offline reinforcement
learning \cite{Dimanidis25}.

\paragraph{Reduction to a cutting-plane problem.} The protocol above is not itself a cutting-plane
problem --- the learner outputs an action and receives an action, not a query point and a halfspace
--- but it reduces to one. The primitive, isolated in \cite{GGKMPS}, is the following.

\smallskip
\noindent\textit{Cutting planes with a strong separation oracle.} A point $\wstar$ is hidden in a
known compact convex body $K_1\subseteq\R^d$. For $t=1,2,\dots$:
\begin{enumerate}
\itemsep0pt \parskip0pt \topsep2pt
\item the learner queries a point $p_t\in\R^d$;
\item an adversary, having seen $p_t$ and the history, returns a unit vector $v_t$ subject only to
      $\ip{\wstar-p_t}{v_t}\ge0$.
\end{enumerate}
The regret is $\Reg_T:=\sum_{t\le T}\ip{\wstar-p_t}{v_t}$, again a sum of nonnegative terms; the
oracle is \emph{strong} in that $v_t$ may be chosen after seeing $p_t$, where a \emph{weak} one
commits to $\pm v_t$ in advance.
\smallskip

To perform the reduction, assume that we have a low-regret (proper) learner $\mathcal{L}$ for the cutting-plane game. (We restrict to proper learners here --- every algorithm of Table~\ref{tab:oilo} except \cite{Dewasurendra26} is one.) We use it to solve inverse
optimization as follows: at each time $t$, we use the query point
$p_t$ maintained by the learner $\mathcal{L}$ at time $t$ as our 
current estimate for the
unknown weight function; i.e., we define $\hat w_t := p_t$ (if $p_t=0$ ---
for our algorithm only at $t=1$ --- take any fixed nonzero $\hat w_t$
instead; then $\ip{p_t}{\delta_t}=0$ and the argument below is
unchanged). We then
play the current optimizer
$\hat{x}_t \in \arg\max_{x \in X_t} \ip{\hat w_t}{x}$; the expert
responds with $x_t \in \arg\max_{x \in X_t} \ip{\wstar}{x}$. Upon
this, we define $\delta_t := x_t-\hat x_t$ and
$v_t:=\delta_t/\|\delta_t\|$ (rounds with $\delta_t=0$ cost nothing and
are skipped). Optimality of $x_t$ for $\wstar$ gives
$\ip{\wstar}{\delta_t}\ge0$ and that of $\hat x_t$ for $\hat w_t$
gives $\ip{\hat w_t}{\delta_t}\le0$; subtracting the two,
\begin{equation}
  \ip{\wstar-\hat w_t}{\delta_t} \geq 0, 
\end{equation}
which means that $\ip{\wstar-p_t}{v_t} \geq 0$, making this a legal
round of the cutting-plane game. Next, observe that 
\begin{equation}\label{eq:reduction}
  \ip{\wstar-\hat w_t}{\delta_t} 
  \;=\; \ip{\wstar}{\delta_t}
  \;-\;\underbrace{\ip{\hat w_t}{\delta_t}}_{\leq\,0} \geq
    \ip{\wstar}{\delta_t} = \ip{\wstar}{x_t-\hat x_t};
\end{equation}
since $x_t,\hat x_t\in\Ball$ we have $\|\delta_t\|\le2$,
summing~(\ref{eq:reduction}) over all timesteps gives
\begin{equation}
  \label{eq:transfer}
  R_T = \sum_t \ip{\wstar}{x_t - \hat x_t} \leq \sum_t \ip{\wstar -
    \hat w_t}{\delta_t} = \sum_t \| \delta_t \| \, \ip{\wstar -
    p_t}{v_t} \leq 2 \Reg_T.
\end{equation}
The middle quantity is the symmetrized regret
$\tilde R_T:=\sum_t\ip{\hat w_t-\wstar}{\hat x_t-x_t}\ge R_T$ that part of the literature bounds. A
cutting-plane algorithm with regret at most $B$ on $K_1=\Ball$ therefore yields a learner with
$\tilde R_T\le2B$, hence $R_T\le2B$, for every $T$. The oracle really is strong: the
adversary picks $X_t$ knowing $\hat w_t$, and $v_t$ depends on $\hat x_t$, so $v_t$ follows $p_t$
and weak-oracle guarantees do not apply. It therefore suffices to bound the regret of this game by
$\mathrm{poly}(d)$, uniformly in $T$. The rest of the paper does so --- with a method that is not a
cutting-plane method at all.

\paragraph{Competing against a weaker expert.} Optimality of the expert was used only through the
inequality $\ip{\wstar}{\delta_t}\ge0$, and only to make the round legal. So the expert need not
be an optimizer at all: it is enough that the attending physician's intervention is, round by round,
at least as good as the resident's under the attending's own criterion. Everything above then holds
verbatim, and $R_T$ becomes the shortfall against what the attending actually did rather than
against the best available action. We keep the optimizing expert as the definition, since that is
the standard model, but no part of the analysis needs it.

This is the first $O(d)$ bound that competes with an expert that does not optimize: the earlier
analyses at this order use the expert's optimality throughout, and \cite{Dewasurendra26} pays a
multiple of its cumulative suboptimality.

\subsection{Our Results}\label{sec:results}

We solve this cutting-plane game with a bound that does not involve the horizon at all.

\begin{theorem}\label{thm:main}
There is a deterministic algorithm which, given only the radius $R=\max_{w\in K_1}\|w\|$,
satisfies
\[
  \Reg_T\;=\;\sum_{t=1}^{T}\ip{\wstar-p_t}{v_t}\;\le\;6\sqrt3\;d\,R\;\le\;11\,d\,R
\]
for every $T$, every hidden $\wstar\in K_1$ and every strong separation oracle.
\end{theorem}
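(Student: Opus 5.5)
The plan is to give an explicit variable-metric algorithm and bound its regret with a potential argument in which every term is nonnegative and bounded at time $1$. Nothing is ever summed over $T$.

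\emph{Algorithm.} The learner maintains a positive definite matrix $H_t$ and the query point $p_t$, starting from $p_1=0$ and $H_1=\lambda I$ with $\lambda\propto 1/R$ (the constant is fixed at the end). On receiving $v_t$ it computes the self-normalized quantity $s_t:=v_t^\top H_t^{-1}v_t$ and performs a rank-one update together with a normalized Newton-type step:
\[
  H_{t+1}=H_t+\beta\,\frac{v_tv_t^\top}{\sqrt{s_t}},\qquad
  p_{t+1}=p_t+\alpha\,\frac{H_{t+1}^{-1}v_t}{\sqrt{s_t}} .
\]
The constants $\alpha,\beta$ are to be tuned. Each round costs $O(d^2)$ via Sherman--Morrison, and the algorithm is deterministic and proper, since $\hat w_t=p_t$. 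The point of dividing by $\sqrt{s_t}$ is that the step length, measured in the current metric, is a constant independent of how informative $v_t$ is. That scale invariance is what should make a horizon-free bound possible.

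\emph{Potential.} Write $e_t:=\wstar-p_t$ and $g_t:=\ip{e_t}{v_t}\ge 0$. I would use
\[
  \Phi_t\;:=\;c_1\,e_t^\top H_te_t\;+\;c_2\,\tr\bigl(H_t^{-1/2}\bigr)\;\ge 0 .
\]
Its initial value is $\Phi_1\le c_1\lambda R^2+c_2 d\lambda^{-1/2}$. The aim is the one-step inequality $\Phi_t-\Phi_{t+1}\ge g_t$. Telescoping would then give $\Reg_T\le\Phi_1$ for every $T$. The scaling of $\lambda$, the scaling of the weights $c_1,c_2$ with $R$, and the optimal balance of $c_1,c_2$ still have to be arranged so that $\Phi_1=O(dR)$ and is homogeneous of degree one in $R$; they would also produce the constant $6\sqrt3$.

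\emph{Quadratic term.} Expanding $e_{t+1}^\top H_{t+1}e_{t+1}$, the quadratic form grows by the rank-one term $\beta g_t^2/\sqrt{s_t}$. The step cancels part of this and subtracts a linear term $2\alpha g_t/\sqrt{s_t}$, at a quadratic cost of order $\alpha^2 s_t'/\sqrt{s_t}$, where $s_t'=v_t^\top H_{t+1}^{-1}v_t\le s_t$. After choosing $\alpha$ appropriately, the change in this term should reduce to at most
\[
  -\frac{g_t}{\sqrt{s_t}}\;+\;\frac{\beta g_t^2}{\sqrt{s_t}}\;+\;O\bigl(\sqrt{s_t}\bigr).
\]
Strong-oracle adaptivity does no harm here, because only the sign condition $g_t\ge0$ is used. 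The trouble is that the decrement carries the factor $1/\sqrt{s_t}$ rather than $1$, and there are positive error terms.

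\emph{Trace term and main obstacle.} The trace-power potential must pay for these error terms and convert the decrement into the unweighted $g_t$. For a rank-one update $H+uu^\top$, the Sherman--Morrison formula for $(H+uu^\top)^{-1}$ combined with operator concavity of $X\mapsto X^{1/2}$ gives a decrease $\tr(H^{-1/2})-\tr((H+uu^\top)^{-1/2})\gtrsim u^\top H^{-3/2}u/(1+u^\top H^{-1}u)$. Bounding $v^\top H^{-3/2}v$ from below in terms of $s_t$ is the \emph{main obstacle}, because $H^{-3/2}$ and $H^{-1}$ are not comparable spectrally in general. If that comparison fails, I would first try the AM--GM step $g_t\le \tfrac{1}{2}\bigl(\gamma g_t^2/\sqrt{s_t}+\sqrt{s_t}/\gamma\bigr)$. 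This splits the regret into a term absorbed by the quadratic decrement and a term $\sqrt{s_t}$ to be charged to the trace decrement. To make that charge valid, I would change the update to $\beta v_tv_t^\top/\sqrt{s_t}$ with the normalization chosen so that the trace decrease is exactly proportional to $\sqrt{s_t}$; this is why the exponent $-1/2$ is matched to the $\sqrt{s_t}$ normalization. I would verify that choice on the eigenbasis of $H_t$ by a one-dimensional convexity computation. Finally I would optimize $\alpha,\beta,\gamma,c_1,c_2$, which I expect to yield the constant $6\sqrt3$.
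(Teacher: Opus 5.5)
\textbf{There is a genuine gap: the algorithm itself fails, because of the $1/\sqrt{s_t}$ normalization of the step.} Your metric update and your potential are the paper's. In your notation $s_t=v_t^\top H_t^{-1}v_t$, which is the \emph{square} of the paper's $s_t$. With the paper's step, $H_1=I$, weights $c_1=1/\alpha$, $c_2=4\alpha/\beta$, and $\alpha=R/(2\sqrt3\,d)$, $\beta=1/(6d)$, your inequality $\Phi_{t+1}\le\Phi_t-g_t$ holds and telescopes to exactly $6\sqrt3\,dR$. The paper does not divide the step by $\sqrt{s_t}$.

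The step $\frac{\alpha}{\sqrt{s_t}}H_{t+1}^{-1}v_t$ costs $\frac{\alpha^2}{s_t}\,v_t^\top H_{t+1}^{-1}v_t=\frac{\alpha^2}{1+\beta\sqrt{s_t}}$ in $e^\top H_{t+1}e$. You wrote $\alpha^2s_t'/\sqrt{s_t}$, but the normalization enters squared. So every round costs at least a constant, while the trace term releases at most $c_2\tr(H_1^{-1/2})$ over the whole run.

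This is a flaw of the algorithm, not only of the analysis. In $d=1$ your rule reads $h_{t+1}=h_t+\beta\sqrt{h_t}$, so $\sqrt{h_t}\le\sqrt\lambda+\beta(t-1)/2$. The step length is then $\delta_t=\alpha/(\sqrt{h_t}+\beta)\ge\alpha/(\sqrt\lambda+\beta(t+1)/2)$. Take the legal adversary $v_t=\operatorname{sign}(\wstar-p_t)$. The iterate always moves $\delta_t$ toward $\wstar$, so $|\wstar-p_t|+|\wstar-p_{t+1}|\ge\delta_t$. Hence the regret is at least $\frac12\sum_{t<T}\delta_t=\Omega\bigl(\frac{\alpha}{\beta}\ln T\bigr)$. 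A step of constant length in the local metric is a $1/t$ step size, and that is exactly what produces the $\ln T$.

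\textbf{The fix is the paper's unnormalized step $\alpha H_{t+1}^{-1}v_t$.} Its cost is $\alpha^2v_t^\top H_{t+1}^{-1}v_t\le\alpha^2s_t$, and $\sum_ts_t$ is precisely what $\tr(H_t^{-1/2})$ pays for. Even your claimed $O(\sqrt{s_t})$ error would have been unpayable, since already in $d=1$ one has $\sqrt{s_t}\asymp1/t$.

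The obstacle you flagged is not one. For unit $v$, write $v^\top H^{-1}v=\sum_iw_i\mu_i$ with $\sum_iw_i=1$. Jensen for $x\mapsto x^{3/2}$ gives $v^\top H^{-3/2}v\ge s^{3/2}$. Klein's inequality (Lemma~\ref{lem:klein}) then yields a trace drop of at least $\frac{\beta}{4}s_t$ once $\beta\sqrt{s_t}\le1$. That matches the $\alpha^2s_t$ cost exactly, so no AM--GM split or re-normalized update is needed.

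\textbf{What your plan is missing is control of the rank-one growth $\beta g_t^2/\sqrt{s_t}$.} It cannot be charged to the trace term. The paper absorbs it into the linear decrement $-2\alpha g_t$ via Cauchy--Schwarz, $g_t\le\|e_t\|_{H_t}\sqrt{s_t}$. This needs the invariant $\beta\|e_t\|_{H_t}\le\alpha$, which must be proved by induction alongside the telescoping. In your language, the potential is monotone while the invariant holds, so $\|e_t\|_{H_t}^2\le\alpha\Phi_t\le\alpha\Phi_1=3R^2$. That gives $\beta\|e_t\|_{H_t}\le\sqrt3R/(6d)=\alpha$, closing the induction. Your proposal never bounds $\|e_t\|_{H_t}$, so this step is absent.
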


The bound is uniform in $T$, hence holds for $T=\infty$, and the algorithm is deterministic and
anytime. Taking $K_1=\Ball$ and combining with \eqref{eq:transfer} gives a deterministic learner for
online inverse linear optimization with $R_T\le\tilde R_T=O(d)$ for every horizon and every
adversarial sequence of action sets, at $O(d^{2})$ arithmetic and one linear optimization per round.
This is the first efficient bound of that order, and the first proper one: the question left open in
\cite{GGKMPS,OkiSakaue} was answered in \cite{Dewasurendra26}, but by an improper rule costing
$T^{\Theta(d)}$ a round. The dimension dependence is within $\sqrt d$ of optimal, since every
algorithm suffers $\Reg\ge\sqrt d\,R$ (Theorem~\ref{thm:lower}).
Section~\ref{sec:convex} records a second consequence: minimizing an $L$-Lipschitz convex function
from subgradient directions alone, with total suboptimality $O(dLR)$ over an infinite run.

\paragraph{The algorithm.} Write $g_t=-v_t$, so that the round's loss is
$r_t=\ip{p_t-\wstar}{g_t}\ge0$: the learner is running online gradient descent on the linear losses
$w\mapsto\ip{w}{g_t}$, with the hidden point $\wstar$ as a comparator that is never beaten.

Our algorithm builds on the framework of Sakaue, Tsuchiya, Bao and Oki \cite{SakaueONS}, who cast
the problem as a variable-metric online gradient descent and run an online Newton step on these
losses, obtaining $O(d\ln T)$ at $O(d^{2})$ per round plus a Mahalanobis projection onto the
learner's domain. Sakaue \cite{SakaueSOP} later showed that the projection can be dispensed with
altogether --- the recommendation is invariant under positive rescaling of the estimate, so the
estimate needs no domain to be confined to --- and that a second-order perceptron update on the
residuals then costs $O(d^{2})$ per round outright. The regret, however, remains $O(d\ln T)$, the
logarithm arising there from the same log-determinant potential. We too keep no domain and project
nowhere, and we match that per-round cost. What we add is one feature in the algorithm --- a
\emph{self-normalization} step in the metric update --- and one new technique in the analysis ---
bounding a \emph{trace power} of the metric. Together these remove the $\ln T$.

The learner maintains a \emph{metric}, encoded by a positive definite matrix $H_t$ starting from
$H_1=I$, which records the information about $\wstar$ accumulated so far. For any direction $u$, the
quadratic form $u^{\top}H_tu$ measures how much has already been learned along $u$: directions
queried often are stretched and directions not yet explored are left alone. The algorithm then takes
the form
\begin{equation}\label{eq:vm}
  s_t=\|g_t\|_{H_t^{-1}},\qquad
  H_{t+1}=H_t+\frac{\tau}{s_t}\,g_tg_t^{\top},\qquad
  p_{t+1}=p_t-\alpha H_{t+1}^{-1}g_t,
\end{equation}
for constants $\alpha,\tau$ fixed in advance; it never maintains the knowledge set $K_t$, so it is
not a cutting-plane method.

After observing the gradient $g_t$, the algorithm stretches the metric in that direction, recording
the information just acquired. The innovation is the \emph{self-normalization}: we divide the
rank-one update $g_tg_t^{\top}$ by $s_t=\|g_t\|_{H_t^{-1}}$, the dual-metric length of $g_t$.

The step direction $H_{t+1}^{-1}g_t$ has a geometric meaning: it moves aggressively in directions
that are still unexplored (where $H_t$ is close to $I$) and conservatively along directions already
queried (where $H_t$ is large), with the preconditioner also bending the step away from $g_t$
towards whatever remains unknown (Figure~\ref{fig:metric}).

\begin{figure}[t]
\centering
\begin{tikzpicture}[scale=1.28,>={Stealth[length=4.5pt]},
   ax/.style={black!22,line width=0.3pt},
   ell/.style={draw=black!70,fill=black!6,line width=0.55pt},
   mv/.style={->,line width=1.1pt},
   reply/.style={->,dashed,black!50,line width=0.5pt},
   lbl/.style={font=\scriptsize},
   pcap/.style={font=\scriptsize,anchor=north}]
\begin{scope}
  \draw[ax] (-1.2,0)--(1.6,0);  \draw[ax] (0,-1.2)--(0,1.35);
  \draw[ell] (0,0) ellipse (1 and 1);
  \draw[reply] (0,0.22) -- (0.5,0.22);  \node[lbl,anchor=west] at (0.53,0.22) {$v_1$};
  \draw[mv] (0,0) -- (0.5,0);
  \fill (0,0) circle (0.033);           \node[lbl,anchor=north east] at (0.02,-0.04) {$p_1$};
  \node[pcap] at (0.2,-1.32) {$t=1$:\ $H_1=I$};
\end{scope}
\begin{scope}[shift={(3.0,0)}]
  \draw[ax] (-1.2,0)--(1.6,0);  \draw[ax] (0,-1.2)--(0,1.35);
  \draw[ell] (0.5,0) ellipse (0.707 and 1);
  \draw[reply] (0.5,0.22) -- (1.0,0.22); \node[lbl,anchor=west] at (1.03,0.22) {$v_2$};
  \draw[mv] (0.5,0) -- (0.793,0);
  \fill (0.5,0) circle (0.033);          \node[lbl,anchor=north east] at (0.52,-0.04) {$p_2$};
  \node[pcap] at (0.2,-1.32) {$t=2$:\ $e_1$ cut once};
\end{scope}
\begin{scope}[shift={(6.0,0)}]
  \draw[ax] (-1.2,0)--(1.6,0);  \draw[ax] (0,-1.2)--(0,1.35);
  \draw[ell] (0.793,0) ellipse (0.541 and 1);
  \draw[reply] (0.616,0.177) -- (0.970,0.531); \node[lbl,anchor=south west] at (0.97,0.53) {$v_3$};
  \draw[mv] (0.793,0) -- (0.908,0.392);
  \fill (0.793,0) circle (0.033);        \node[lbl,anchor=north west] at (0.82,-0.04) {$p_3$};
  \node[pcap] at (0.2,-1.32) {$t=3$:\ fresh direction};
\end{scope}
\begin{scope}[shift={(9.0,0)}]
  \draw[ax] (-1.2,0)--(1.6,0);  \draw[ax] (0,-1.2)--(0,1.35);
  \draw[ell,rotate around={13.7:(0.908,0.392)}] (0.908,0.392) ellipse (0.489 and 0.824);
  \draw[reply] (0.731,0.569) -- (1.085,0.923); \node[lbl,anchor=south west] at (1.08,0.92) {$v_4$};
  \draw[mv] (0.908,0.392) -- (0.980,0.637);
  \fill (0.908,0.392) circle (0.033);    \node[lbl,anchor=north west] at (0.94,0.36) {$p_4$};
  \node[pcap] at (0.2,-1.32) {$t=4$:\ metric now tilted};
\end{scope}
\end{tikzpicture}
\caption{Four rounds in the plane. The shaded set is $\{u:\|u\|_{H_t}\le\alpha\}$; steps
are large in unexplored directions and short in explored ones, with the preconditioner bending the
step away from $v_t$. Drawn with $\alpha=\tau=1$.}
\label{fig:metric}
\end{figure}

\paragraph{Analysis idea.} We track the distance from the current query to the hidden point in the
local metric, $D_t=\|p_t-\wstar\|_{H_t}$. Two effects compete: the point $p_t$ moves closer to
$\wstar$ along $g_t$, which decreases $D_t$, but the metric also stretches along that same
direction, which inflates $D_t$ since the same Euclidean gap counts for more in the new metric.
In the dynamics of $D_t^{2}$ (Lemma~\ref{lem:contract}) the two effects combine into a clean
inequality tied directly to the per-round loss:
\[
  D_{t+1}^{2}\;\le\; D_t^{2}\;-\;\alpha\, r_t\;+\;\alpha^{2}\,s_t^{2}.
\]
Telescoping eliminates $D_t$ and gives
$\Reg_T\le D_1^{2}/\alpha+\alpha\sum_{t\le T}s_t^{2}$,
so all that remains is to bound $\sum_ts_t^{2}$.
This is where our second innovation enters: a potential based on the \emph{trace power}
$\tr\bigl(H_t^{-1/2}\bigr)$, which starts at $d$ and, by Lemma~\ref{lem:var}, decreases by at least
$\tfrac{\tau}{4}s_t^{2}$ each round. Hence
\[
  \sum_{t}s_t^{2}\;\le\;\frac{4d}{\tau}\qquad\text{for every horizon:}
\]
a budget of order $d$ on the total squared step length, with no $T$ in it. The standard potential
for a metric of this shape is $\log\det H_t$, which bounds the same sum but grows like $d\log T$;
that logarithm is the $\ln T$ of \cite{SakaueONS} and of every other row of Table~\ref{tab:oilo}
resting on a volumetric potential. A trace power is bounded outright. Combining the two bounds
gives Theorem~\ref{thm:main}.

\paragraph{Trace powers elsewhere.} The family $\tr(H^{-\beta})$ occurs in several other contexts.
In optimal design it is Kiefer's family of
$\Phi_r$ criteria \cite{Kiefer74}, $\Phi_r(C)=(\tfrac1d\tr C^{r})^{1/r}$ for an information matrix
$C$, with $r=0$ giving D-optimality --- the $\log\det$ potential above --- and $r=-1$ giving
A-optimality; ours is $\Phi_{-1/2}$ at $C=H_t$, strictly between the two. Allen-Zhu, Liao and
Orecchia \cite{ALO15} replace an entropy regularizer by the trace power $-2\tr(A^{1/2})$ to recover
the linear-size spectral sparsifiers of Batson, Spielman and Srivastava \cite{BSS}, removing a
logarithmic factor --- the same trade made here, where $\log\det$ leaves an $O(d\log T)$ and
$\tr(H^{-1/2})$ removes the $\log T$ altogether. Trace powers with $\beta<1$ are also the Schatten
quasi-norms used as surrogates for rank, and the Lewis weights \cite{CP15} behind the $\ell_p$
analogues of the John ellipsoid. The classical cutting-plane potentials, by contrast --- volume for
the ellipsoid and center-of-gravity methods, the volumetric barrier for Vaidya's method
\cite{Vaidya}, and hybrids in the fastest current methods \cite{LSW} --- all sit at $\beta\to0$.

\section{Regret of the Variable Metric Algorithm}\label{sec:alg}

We prove Theorem~\ref{thm:main} for the following algorithm.

\begin{quote}
\textbf{Algorithm VM$(\alpha,\tau)$}. Set $p_1=0$ and $H_1=I$. At round
$t$, query $p_t$; on receiving $v_t$, put $g_t=-v_t$ and update $H_{t+1}$ and $p_{t+1}$ by
\eqref{eq:vm}.
\end{quote}

Write the loss of round $t$ as $r_t=\ip{p_t-\wstar}{g_t}=\ip{\wstar-p_t}{v_t}\ge0$. The proof comes
in two halves. Section~\ref{sec:contract} shows that each step contracts the distance to $\wstar$ by
an amount proportional to $r_t$, at a cost of $\alpha^{2}s_t^{2}$; Section~\ref{sec:pot} shows that
those costs total at most $4d/\tau$ however long the run. Section~\ref{sec:bound} fixes the
constants and puts the two together.

\subsection{Distance contraction}\label{sec:contract}

Everything turns on the distance from the current query to the hidden point, measured in the local
metric: $D_t=\|p_t-\wstar\|_{H_t}$. Each step decreases $D_t^{2}$ by an amount proportional to the
loss, at a cost controlled by $s_t^{2}$. This holds under one condition, $\tau D_t\le\alpha$, which
we verify by induction in Section~\ref{sec:bound}.

\begin{lemma}[Distance contraction]\label{lem:contract}
If $\tau D_t\le\alpha$ then
$D_{t+1}^{2}\le D_t^{2}-\alpha r_t+\alpha^{2}s_t^{2}$.
\end{lemma}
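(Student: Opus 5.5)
The plan is to expand $D_{t+1}^2$ exactly, using the update \eqref{eq:vm}, and then bound the resulting terms one at a time. Write $e_t=p_t-\wstar$, so that $r_t=\ip{e_t}{g_t}\ge0$ and $D_t^2=e_t^\top H_te_t$. The update gives $e_{t+1}=e_t-\alpha H_{t+1}^{-1}g_t$. Note that $s_t>0$, since $g_t$ is a unit vector and $H_t\succ0$. Expanding the quadratic form,
\[
  D_{t+1}^2 \;=\; e_t^\top H_{t+1}e_t \;-\;2\alpha\,\ip{e_t}{g_t}\;+\;\alpha^2\,g_t^\top H_{t+1}^{-1}g_t .
\]
The middle term is exactly $-2\alpha r_t$, because $H_{t+1}H_{t+1}^{-1}=I$ cancels the preconditioner in the cross term.

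For the first term, insert $H_{t+1}=H_t+\frac{\tau}{s_t}g_tg_t^\top$. This gives
\[
  e_t^\top H_{t+1}e_t \;=\; D_t^2+\frac{\tau}{s_t}\,r_t^2 .
\]
This is the metric-stretching penalty, and it is the one step that needs the hypothesis $\tau D_t\le\alpha$. By Cauchy--Schwarz in the dual pair $(H_t,H_t^{-1})$,
\[
  r_t=\ip{e_t}{g_t}\le\|e_t\|_{H_t}\,\|g_t\|_{H_t^{-1}}=D_ts_t .
\]
Since $r_t\ge0$, it follows that
\[
  \frac{\tau}{s_t}\,r_t^2\;\le\;\tau D_t\,r_t\;\le\;\alpha r_t .
\]
This absorbs half of the $-2\alpha r_t$ and leaves a net $-\alpha r_t$. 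The self-normalization by $s_t$ is exactly what makes this cancellation scale-free.

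For the last term, $H_{t+1}\succeq H_t$ implies $H_{t+1}^{-1}\preceq H_t^{-1}$. Hence
\[
  g_t^\top H_{t+1}^{-1}g_t\;\le\; g_t^\top H_t^{-1}g_t\;=\;s_t^2 .
\]
Combining the three bounds yields $D_{t+1}^2\le D_t^2-\alpha r_t+\alpha^2s_t^2$, as claimed.

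The only real obstacle is the stretching term $\frac{\tau}{s_t}r_t^2$. The key observations are that $r_t\le D_ts_t$ turns it into something linear in $r_t$, and that the hypothesis $\tau D_t\le\alpha$ is exactly what lets it be dominated by the descent term.
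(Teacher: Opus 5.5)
Your proof is correct and follows the paper's argument essentially line for line. You expand $D_{t+1}^{2}$ exactly in the $H_{t+1}$ metric, bound the stretching term $\frac{\tau}{s_t}r_t^{2}$ by $\tau D_t r_t\le\alpha r_t$ via Cauchy--Schwarz in the $(H_t,H_t^{-1})$ pairing together with $r_t\ge0$, and control the last term by $H_{t+1}^{-1}\preceq H_t^{-1}$.
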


\begin{proof}
Expanding the square in the $H_{t+1}$ metric,
\[
  D_{t+1}^{2}\;=\;\bigl\|p_t-\alpha H_{t+1}^{-1}g_t-\wstar\bigr\|_{H_{t+1}}^{2}
  \;=\;\|p_t-\wstar\|^{2}_{H_{t+1}}-2\alpha r_t+\alpha^{2}\|g_t\|^{2}_{H_{t+1}^{-1}} .
\]
Here $H_{t+1}^{-1}\preceq H_t^{-1}$ bounds $\|g_t\|^{2}_{H_{t+1}^{-1}}\le s_t^{2}$, and the
definition of $H_{t+1}$ expands
$\|p_t-\wstar\|^{2}_{H_{t+1}}=D_t^{2}+\frac{\tau}{s_t}r_t^{2}$. Cauchy--Schwarz in the $H_t$ metric
gives $r_t\le\|p_t-\wstar\|_{H_t}\|g_t\|_{H_t^{-1}}=D_ts_t$, so, as $r_t\ge0$,
\[
  \frac{\tau}{s_t}r_t^{2}\;=\;\tau r_t\cdot\frac{r_t}{s_t}\;\le\;\tau D_tr_t .
\]
Collecting, $D_{t+1}^{2}\le D_t^{2}-(2\alpha-\tau D_t)r_t+\alpha^{2}s_t^{2}$, and $\tau D_t\le\alpha$
makes the bracket at least $\alpha$.
\end{proof}

Whenever the precondition $\tau D_t\le\alpha$ holds for all $t\le T$, telescoping the lemma gives
\begin{equation}\label{eq:tele}
  \alpha\sum_{t\le T}r_t\;\le\;D_1^{2}+\alpha^{2}\sum_{t\le T}s_t^{2},
\end{equation}
so the regret $\Reg_T=\sum_tr_t$ is controlled by the initial distance $D_1$ and the total squared
step length $\sum_ts_t^{2}$. The next subsection bounds $\sum_ts_t^{2}$ by $4d/\tau$, with no
dependence on $T$.

\subsection{The metric potential}\label{sec:pot}


The bound on $\sum_ts_t^{2}$ rests on one inequality, the first-order form of the concavity of
$X\mapsto\tr f(X)$ for concave $f$.

\begin{lemma}[Klein's inequality; see e.g.\ \cite{Carlen10}]\label{lem:klein}
Let $f$ be concave on $[0,\infty)$ and differentiable on $(0,\infty)$, and let $X\succ0$,
$Y\succeq0$. Then
\[
  \tr f(Y)\;\le\;\tr f(X)+\tr\bigl(f'(X)(Y-X)\bigr).
\]
\end{lemma}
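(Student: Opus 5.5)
The plan is to reduce to the scalar inequality by diagonalizing both matrices. Write the spectral decompositions $X=\sum_i\lambda_i u_iu_i^{\top}$ and $Y=\sum_j\mu_j w_jw_j^{\top}$. Here $\lambda_i>0$, $\mu_j\ge0$, and $\{u_i\}$, $\{w_j\}$ are orthonormal bases. Set $c_{ij}:=\ip{u_i}{w_j}^{2}$, so the matrix $(c_{ij})$ is doubly stochastic: $\sum_i c_{ij}=\sum_j c_{ij}=1$.

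Using these weights, each of the three traces is a double sum:
\[
  \tr f(Y)=\sum_j f(\mu_j)=\sum_{i,j}c_{ij}f(\mu_j),\qquad \tr f(X)=\sum_{i,j}c_{ij}f(\lambda_i),
\]
and, since $f'(X)=\sum_i f'(\lambda_i)u_iu_i^{\top}$ is well defined because every $\lambda_i>0$,
\[
  \tr\bigl(f'(X)(Y-X)\bigr)=\sum_{i,j}c_{ij}f'(\lambda_i)\mu_j-\sum_i f'(\lambda_i)\lambda_i=\sum_{i,j}c_{ij}f'(\lambda_i)(\mu_j-\lambda_i).
\]
The last equality again uses $\sum_j c_{ij}=1$.

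The claimed inequality therefore follows term by term from the scalar tangent-line bound
\[
  f(\mu)\le f(\lambda)+f'(\lambda)(\mu-\lambda),\qquad \lambda>0,\ \mu\ge0.
\]
This bound holds for a function concave on $[0,\infty)$ and differentiable at the interior point $\lambda$. Multiplying it by $c_{ij}\ge0$ and summing gives the result.

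The only delicate point is the endpoint $\mu_j=0$, since $f$ need not be differentiable at $0$. This causes no trouble: the tangent-line inequality at an interior point $\lambda$ holds for every $\mu$ in the domain, including $0$, by concavity alone. Differentiability is used only at the $\lambda_i$, which are strictly positive because $X\succ0$.

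Nothing else is an obstacle; the argument is the standard proof found in \cite{Carlen10}.
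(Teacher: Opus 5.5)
Your proof is correct and is essentially the paper's own argument: spectral decompositions of $X$ and $Y$, the doubly stochastic weights $\ip{u_i}{w_j}^{2}$ to write all three traces as double sums, and the scalar tangent-line bound applied term by term. Your remark that differentiability is needed only at the strictly positive eigenvalues of $X$, with the case $\mu_j=0$ covered by concavity on the closed half-line, makes explicit a point the paper leaves implicit.
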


\begin{proof}
Write $X=\sum_ix_iu_iu_i^{\top}$ and $Y=\sum_jy_jw_jw_j^{\top}$ spectrally, with $\{u_i\}$ and
$\{w_j\}$ orthonormal bases, and set $s_{ij}=\ip{u_i}{w_j}^{2}$, a doubly stochastic matrix. Then
$\tr f(Y)=\sum_{i,j}s_{ij}f(y_j)$, $\tr f(X)=\sum_{i,j}s_{ij}f(x_i)$ and
$\tr\bigl(f'(X)(Y-X)\bigr)=\sum_{i,j}s_{ij}f'(x_i)(y_j-x_i)$, so the claim reads
\[
  \sum_{i,j}s_{ij}\Bigl[f(y_j)-f(x_i)-f'(x_i)(y_j-x_i)\Bigr]\;\le\;0,
\]
which holds term by term because $f$ lies below its tangents.
\end{proof}

We can now bound the total squared step length. The potential is the trace power
$\tr\bigl(H_t^{-1/2}\bigr)$: it starts at $d$, never goes negative, and Klein's inequality applied to
$f(x)=x^{1/2}$ makes it drop by at least $\tfrac{\tau}{4}s_t^{2}$ each round. The whole sequence of
steps is paid for out of that initial budget of $d$, whatever the horizon.

\begin{lemma}[Step-size bound]\label{lem:var}
For every $\tau\in(0,1]$ and every $T$, $\ \sum_{t\le T}s_t^{2}\le\dfrac{4d}{\tau}$.
\end{lemma}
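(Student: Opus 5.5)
The plan is to show that the potential $\Phi_t:=\tr\bigl(H_t^{-1/2}\bigr)$ decreases by at least $\tfrac{\tau}{4}s_t^{2}$ in every round, and then telescope. Since $\Phi_1=\tr I=d$ and $\Phi_t\ge0$, summing gives $\tfrac{\tau}{4}\sum_{t\le T}s_t^{2}\le\Phi_1-\Phi_{T+1}\le d$, which is the claim. Two standing facts will be used throughout.
\begin{itemize}
\item $H_t\succeq I$ for all $t$. This holds by induction, since each update adds a positive semidefinite rank-one term.
\item $\|g_t\|=1$, since $v_t$ is a unit vector.
\end{itemize}
Together these give $s_t^{2}=g_t^{\top}H_t^{-1}g_t\le1$, so $s_t\le1$.

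For the per-round drop I apply Lemma~\ref{lem:klein} with $f(x)=x^{1/2}$, which is concave on $[0,\infty)$. I take $X=H_t^{-1}$ and $Y=H_{t+1}^{-1}$, so that $\tr f(X)=\Phi_t$ and $\tr f(Y)=\Phi_{t+1}$. Then $f'(X)=\tfrac12 H_t^{1/2}$. With $c=\tau/s_t$, Sherman--Morrison gives
\[
  Y-X=-\frac{c\,H_t^{-1}g_tg_t^{\top}H_t^{-1}}{1+c\,s_t^{2}}=-\frac{c\,H_t^{-1}g_tg_t^{\top}H_t^{-1}}{1+\tau s_t}.
\]
Klein's inequality therefore yields
\[
  \Phi_t-\Phi_{t+1}\;\ge\;\frac{\tau}{2s_t(1+\tau s_t)}\,g_t^{\top}H_t^{-3/2}g_t .
\]

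The step I expect to be the crux is lower-bounding $g_t^{\top}H_t^{-3/2}g_t$ in terms of $s_t$. The monotonicity bound $H_t^{-3/2}\preceq H_t^{-1}$ points the wrong way, and a plain Cauchy--Schwarz estimate only gives $s_t^{4}$, which loses a factor of $s_t$. Instead I use convexity over the spectrum.
\begin{itemize}
\item Write $H_t=\sum_i\lambda_iu_iu_i^{\top}$ and set $\mu_i=\ip{u_i}{g_t}^{2}$. Because $\|g_t\|=1$, the $\mu_i$ form a probability distribution.
\item In this notation $g_t^{\top}H_t^{-3/2}g_t=\sum_i\mu_i(\lambda_i^{-1})^{3/2}$ and $s_t^{2}=\sum_i\mu_i\lambda_i^{-1}$.
\item The map $y\mapsto y^{3/2}$ is convex, so Jensen's inequality gives $g_t^{\top}H_t^{-3/2}g_t\ge(s_t^{2})^{3/2}=s_t^{3}$.
\end{itemize}

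Substituting this bound gives
\[
  \Phi_t-\Phi_{t+1}\ge\frac{\tau s_t^{2}}{2(1+\tau s_t)}.
\]
Since $\tau\le1$ and $s_t\le1$, we have $1+\tau s_t\le2$, so the drop is at least $\tfrac{\tau}{4}s_t^{2}$, as required. The only remaining routine checks are that $Y\succeq0$, which holds because $H_{t+1}^{-1}\succ0$, and that rounds with $s_t=0$ cannot occur. The latter holds because $g_t\ne0$ and $H_t\succ0$, so the update is well defined.
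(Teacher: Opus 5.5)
Your proof is correct and follows the paper's argument step for step. The chain is the same: Klein's inequality for $f(x)=x^{1/2}$ at $X=H_t^{-1}$, $Y=H_{t+1}^{-1}$ with Sherman--Morrison, then Jensen for $y\mapsto y^{3/2}$ over the spectral weights of $g_t$ to get $g_t^{\top}H_t^{-3/2}g_t\ge s_t^{3}$, and finally $1+\tau s_t\le2$ and telescoping from $\tr(H_1^{-1/2})=d$; your extra checks ($s_t>0$, $Y\succeq0$) are implicit in the paper.
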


\begin{proof}
Since $H_{t+1}\succeq H_t\succeq I$ we have $0\prec H_{t+1}^{-1}\preceq H_t^{-1}\preceq I$, and as
$\|g_t\|_2=1$, $s_t^{2}=g_t^{\top}H_t^{-1}g_t\le\lambda_{\max}(H_t^{-1})\le1$. By
Sherman--Morrison applied to $H_{t+1}=H_t+\frac{\tau}{s_t}g_tg_t^{\top}$, and using
$g_t^{\top}H_t^{-1}g_t=s_t^{2}$,
\[
  H_{t+1}^{-1}=H_t^{-1}-\frac{\tau}{s_t+\tau s_t^{2}}\,H_t^{-1}g_tg_t^{\top}H_t^{-1} .
\]
Apply Lemma~\ref{lem:klein} with $f(x)=x^{1/2}$, $X=H_t^{-1}$ and $Y=H_{t+1}^{-1}$, so that
$f'(X)=\tfrac12H_t^{1/2}$:
\[
  \tr\bigl(H_t^{-1/2}\bigr)-\tr\bigl(H_{t+1}^{-1/2}\bigr)\;\ge\;
  \tfrac12\tr\bigl((H_t^{-1}-H_{t+1}^{-1})H_t^{1/2}\bigr)
  \;=\;\frac{\tau}{2(s_t+\tau s_t^{2})}\;g_t^{\top}H_t^{-3/2}g_t .
\]
Write $H_t^{-1}=\sum_i\mu_iq_iq_i^{\top}$ spectrally and set $w_i=\ip{g_t}{q_i}^{2}$, which sum to
$1$. Convexity of $x\mapsto x^{3/2}$ then gives
$g_t^{\top}H_t^{-3/2}g_t=\sum_iw_i\mu_i^{3/2}\ge\bigl(\sum_iw_i\mu_i\bigr)^{3/2}=s_t^{3}$. Hence
\[
  \tr\bigl(H_t^{-1/2}\bigr)-\tr\bigl(H_{t+1}^{-1/2}\bigr)
  \;\ge\;\frac{\tau s_t^{3}}{2(s_t+\tau s_t^{2})}=\frac{\tau s_t^{2}}{2(1+\tau s_t)}
  \;\ge\;\frac{\tau}{4}\,s_t^{2},
\]
using $s_t\le1$ and $\tau\le1$. Telescoping, with $\tr(H_1^{-1/2})=d$ and
$\tr(H_{T+1}^{-1/2})\ge0$, gives the claim.
\end{proof}

\begin{remark}[Importance of self-normalization]\label{rem:onedim}
The effect of self-normalization is already visible in one dimension. When $d=1$ the metric is a
scalar $h_t$ and $s_t=h_t^{-1/2}$. With the self-normalized update, \eqref{eq:vm} reads
$h_{t+1}=h_t+\tau\sqrt{h_t}$, so $\sqrt{h_t}$ grows linearly:
$\sqrt{h_t}\approx1+\tfrac{\tau}{2}(t-1)$. Then $s_t^{2}\asymp t^{-2}$ and
$\sum_ts_t^{2}\approx2/\tau$ converges --- the budget is finite. Without self-normalization, the
plain update $h_{t+1}=h_t+\tau$ gives $h_t\approx\tau t$, so
$\sum_{t\le T}s_t^{2}\approx\tau^{-1}\ln T$, which diverges. The entire difference between
$O(d\log T)$ and $O(d)$ comes down to the difference between $\sum1/t$ and $\sum1/t^{2}$.
\end{remark}

\subsection{The regret bound}\label{sec:bound}

It remains to choose $\alpha$ and $\tau$. The trade-off is visible in \eqref{eq:tele}: a large
$\alpha$ shrinks $D_1^{2}/\alpha$ but inflates $\alpha\sum_ts_t^{2}$, and $\tau$ must stay small
enough relative to $\alpha$ for the precondition $\tau D_t\le\alpha$ of Lemma~\ref{lem:contract} to
hold at every step. The values below balance the two.

\begin{theorem}\label{thm:vm}
Let $R=\max_{w\in K_1}\|w\|$ and run VM$(\alpha,\tau)$ with
$\alpha=\frac{R}{2\sqrt3\,d}$ and $\tau=\frac{1}{6d}$. Then $\Reg_T\le6\sqrt3\,dR\le11\,dR$ for every
$T$, every $\wstar\in K_1$ and every strong separation oracle. This is Theorem~\ref{thm:main}; for
$K_1=\Ball$ it reads $\Reg_T\le11\,d$.
\end{theorem}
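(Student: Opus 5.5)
The proof is an induction on $t$ that keeps the precondition of Lemma~\ref{lem:contract} in force, followed by a direct substitution into \eqref{eq:tele} together with Lemma~\ref{lem:var}. With the stated constants, the threshold in the precondition is
\[
\tau D_t\le\alpha \iff D_t\le \alpha/\tau=\sqrt3\,R .
\]
So the whole argument reduces to showing that $D_t\le\sqrt3\,R$ for every $t$. Since $\tau=\frac1{6d}\le1$, Lemma~\ref{lem:var} applies and gives $\sum_{t\le T}s_t^2\le 4d/\tau=24d^2$ for every $T$. Also $s_t>0$ always, because $\|g_t\|=1$ and $H_t\succ0$, so the update \eqref{eq:vm} is well defined.

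\textbf{Induction.} I would prove, by strong induction on $t$, that $D_t^2\le 3R^2$.
\begin{itemize}
\item \emph{Base case.} $p_1=0$ and $H_1=I$, so $D_1=\|\wstar\|\le R$.
\item \emph{Inductive step.} Suppose $D_s\le\sqrt3R$, i.e.\ $\tau D_s\le\alpha$, for all $s\le t$. Lemma~\ref{lem:contract} then applies at each round $s\le t$. Telescoping it and discarding the nonpositive terms $-\alpha r_s$ (recall $r_s\ge0$) gives
\[
D_{t+1}^2\;\le\;D_1^2+\alpha^2\sum_{s\le t}s_s^2\;\le\;R^2+\frac{R^2}{12d^2}\cdot 24d^2\;=\;3R^2 .
\]
This closes the induction.
\end{itemize}
Consequently the precondition holds at every round, for every $T$, and \eqref{eq:tele} is valid for all horizons.

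\textbf{Regret bound.} Dividing \eqref{eq:tele} by $\alpha$ and using Lemma~\ref{lem:var},
\[
\Reg_T=\sum_{t\le T}r_t\;\le\;\frac{D_1^2}{\alpha}+\alpha\cdot 24d^2
\;\le\;2\sqrt3\,dR+\frac{24d^2R}{2\sqrt3\,d}
\;=\;2\sqrt3\,dR+4\sqrt3\,dR
\;=\;6\sqrt3\,dR .
\]
Numerically $6\sqrt3\approx10.39<11$, which gives the second inequality in the statement.

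The only delicate point is the circularity between Lemma~\ref{lem:contract}, which needs $\tau D_t\le\alpha$, and the control of $D_t$, which comes from Lemma~\ref{lem:contract} itself. The strong induction above resolves it. The cumulative budget of Lemma~\ref{lem:var} holds unconditionally, since the metric update does not depend on $\wstar$, so the bound $3R^2$ on $D_{t+1}^2$ is uniform in $t$. Everything else is arithmetic with the chosen constants.
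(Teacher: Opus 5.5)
Your proof is correct and follows the paper's argument essentially verbatim. You use the same strong induction to show $D_t\le\sqrt3\,R$, which keeps $\tau D_t\le\alpha$ and hence Lemma~\ref{lem:contract} in force, with the budget $\sum_t s_t^2\le 24d^2$ from Lemma~\ref{lem:var}, and then finish with the telescoped bound \eqref{eq:tele}; the only cosmetic difference is that you split the final bound as $D_1^2/\alpha+24\alpha d^2=2\sqrt3\,dR+4\sqrt3\,dR$, whereas the paper bounds $D_1^2+\alpha^2\sum_t s_t^2\le 3R^2$ and then divides by $\alpha$.
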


\begin{proof}
We first show $D_t\le\sqrt3\,R$ for every $t$, by induction. At $t=1$, $H_1=I$ and $p_1=0$ give
$D_1=\|\wstar\|\le R$. Whenever $D_k\le\sqrt3R$ we have
$\tau D_k\le\frac{\sqrt3R}{6d}=\frac{R}{2\sqrt3\,d}=\alpha$, so Lemma~\ref{lem:contract} applies at
step $k$. Assume the bound for all $k\le t$ and telescope the lemma over those steps; dropping the
nonpositive term $-\alpha\sum_kr_k$ leaves
\[
  D_{t+1}^{2}\;\le\;D_1^{2}+\alpha^{2}\sum_{k\le t}s_k^{2}
  \;\le\;R^{2}+\frac{R^{2}}{12d^{2}}\cdot 24d^{2}\;=\;3R^{2},
\]
by Lemma~\ref{lem:var} with $\frac{4d}{\tau}=24d^{2}$, legitimate since $\tau=\frac{1}{6d}\le1$.
This closes the induction. Telescoping once more and keeping the regret term,
\[
  \alpha\sum_{t\le T}r_t\;\le\;D_1^{2}-D_{T+1}^{2}+\alpha^{2}\sum_{t\le T}s_t^{2}\;\le\;R^{2}+2R^{2}=3R^{2},
\]
so $\Reg_T=\sum_tr_t\le\frac{3R^{2}}{\alpha}=6\sqrt3\,dR$.
\end{proof}

\begin{remark}[Cost]\label{rem:cost}
Maintaining $H_t^{-1}$ by Sherman--Morrison makes the metric update and the vector $H_{t+1}^{-1}g_t$
cost $O(d^{2})$ per round: there is no eigendecomposition, no optimization subproblem, and nothing
that depends on $T$. The iterates stay bounded on their own, since $H_t\succeq I$ gives
$\|p_t-\wstar\|_2\le D_t\le\sqrt3\,R$ and hence $\|p_t\|\le(1+\sqrt3)R$.
\end{remark}

\section{Further Properties}\label{sec:further}

We collect three properties of the game and of the algorithm. Section~\ref{sec:lower} shows that
every algorithm suffers $\Reg\ge\sqrt d\,R$, so Theorem~\ref{thm:vm} is optimal up to a factor
$\sqrt d$. Section~\ref{sec:corruption} lets the oracle lie on a total budget $C$ and pays $O(C)$
for it. Section~\ref{sec:lowrank} replaces $d$ by the rank of the responses actually received, without
knowing that rank in advance.

\subsection{Lower bound}\label{sec:lower}

The following lower bound is due to Sakaue, Tsuchiya, Bao and Oki \cite{SakaueONS}; we include the
short proof for completeness.

\begin{theorem}[\cite{SakaueONS}]\label{thm:lower}
Suppose $K_1$ is invariant under sign flips in some orthonormal basis $u_1,\dots,u_d$. Then every
algorithm, deterministic or randomized, satisfies
\[
  \Reg_d\;\ge\;\max_{w\in K_1}\ \sum_{j=1}^{d}\bigl|\ip{w}{u_j}\bigr| .
\]
For the unit ball this is $\sqrt d$ and for the cube $[-1,1]^{d}$ it is $d$; in both cases it equals
$\sqrt d\,R$.
\end{theorem}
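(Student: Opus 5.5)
Fix a maximizer $w^\circ\in\arg\max_{w\in K_1}\sum_j|\ip{w}{u_j}|$ and set $a_j=|\ip{w^\circ}{u_j}|$. By sign-flip invariance, every point $w_\sigma:=\sum_j\sigma_ja_ju_j$ with $\sigma\in\{-1,1\}^d$ lies in $K_1$. The plan is an adversary that, over $d$ rounds, reveals one coordinate per round. Its responses are consistent with \emph{some} $w_\sigma$, and the sign at round $j$ is chosen after seeing $p_j$. Against a randomized learner the same adversary works pathwise, since it is adaptive: it sees $p_t$ before answering. The bound therefore holds for every realization, and hence in expectation.

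At round $j$ the learner queries $p_j$. The adversary sets $\sigma_j:=+1$ if $\ip{p_j}{u_j}\le0$ and $\sigma_j:=-1$ otherwise, and returns the unit vector $v_j:=\sigma_ju_j$. Whatever the remaining signs $\sigma_{j+1},\dots,\sigma_d$ turn out to be, the final hidden point $\wstar=w_\sigma$ satisfies
\[
\ip{\wstar-p_j}{v_j}=\sigma_j\ip{w_\sigma}{u_j}-\sigma_j\ip{p_j}{u_j}=a_j+|\ip{p_j}{u_j}|\ \ge\ a_j .
\]
Here we used $\sigma_j\ip{w_\sigma}{u_j}=\sigma_j^2a_j=a_j$ and $-\sigma_j\ip{p_j}{u_j}=|\ip{p_j}{u_j}|$ by the choice of sign (with $\ip{p_j}{u_j}=0$ handled by $\sigma_j=+1$). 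Each response is thus legal, since the inner product is nonnegative, and it is legal for the $\wstar$ fixed at the end. Because the oracle is only constrained relative to the hidden point, and that point can be committed after all $d$ signs are chosen, the game is valid. Summing over $j=1,\dots,d$ gives $\Reg_d\ge\sum_ja_j$, which is the claimed maximum.

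For the examples, take the standard basis. For the unit ball, $\max_{\|w\|\le1}\|w\|_1=\sqrt d$, attained at $w=(1,\dots,1)/\sqrt d$, and $R=1$. For the cube, $\max\|w\|_1=d$ and $R=\sqrt d$. In both cases the bound equals $\sqrt d\,R$.

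The main subtlety is the adaptivity issue: the adversary must commit to $\wstar$ only at the end, or equivalently the bound is a statement that some $\wstar\in K_1$ is bad. For a deterministic learner this is immediate, because the adversary's choices determine $\sigma$ and hence $\wstar$. For a randomized learner one must check that the regret definition allows $\wstar$ to depend on the run. If it does not, and $\wstar$ must be fixed in advance, I would instead draw $\sigma$ uniformly at random and observe that $p_j$ is independent of $\sigma_j$. Then $\mathbb{E}[\sigma_j\ip{p_j}{u_j}]=0$, but the response $v_j=\sigma_ju_j$ is legal only when $a_j-\sigma_j\ip{p_j}{u_j}\ge0$, which may fail. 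I expect this to be handled by the strong-oracle convention, namely that the adversary may answer with $-\sigma_j u_j$-type choices adaptively. I would present the adaptive-adversary version as the main proof.
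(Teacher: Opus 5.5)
Your proof is correct and is the paper's argument: the same adaptive adversary that returns $\sigma_ju_j$ with the sign opposite to $\ip{p_j}{u_j}$, the same identity $\ip{\wstar-p_j}{v_j}=a_j+|\ip{p_j}{u_j}|$, and the same pathwise extension to randomized learners. Your tie-break $\sigma_j=+1$ at $\ip{p_j}{u_j}=0$ is slightly cleaner than the paper's $-\operatorname{sign}$, which is undefined there. The closing digression about a pre-committed $\wstar$ is not needed for the statement as posed, but it does go through in expectation: answer $-\sigma_ju_j$ whenever $\sigma_ju_j$ would be illegal, so each round's loss is $|a_j-\sigma_j\ip{p_j}{u_j}|\ge a_j-\sigma_j\ip{p_j}{u_j}$, and since $p_j$ is independent of $\sigma_j$ its expectation is $a_j$.
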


\begin{proof}
Fix $w=\sum_jc_ju_j\in K_1$ attaining the maximum, with all $c_j\ge0$, which sign symmetry permits.
For the unit ball, this is $c_j=1/\sqrt d$ for every $j$, giving $\sum_jc_j=\sqrt d$.
At round $j\le d$ the adversary observes $p_j$ and returns
$v_j=\sigma_ju_j$ with $\sigma_j=-\operatorname{sign}\ip{p_j}{u_j}$, committing to
$\ip{\wstar}{u_j}=\sigma_jc_j$. The point $\wstar=\sum_j\sigma_jc_ju_j$ is a sign flip of $w$, hence
lies in $K_1$, and every response is consistent with it:
\[
  \ip{\wstar-p_j}{v_j}=\sigma_j\bigl(\sigma_jc_j-\ip{p_j}{u_j}\bigr)
  =c_j+\bigl|\ip{p_j}{u_j}\bigr|\;\ge\;c_j\;\ge\;0 .
\]
Summing the $d$ losses gives the claim. The adversary is adaptive and the argument is pathwise, so it
applies verbatim to randomized learners.
\end{proof}

Since $\max_{U\in O(d)}\|Uw\|_1=\sqrt d\,\|w\|_2$, the quantity in Theorem~\ref{thm:lower} never
exceeds $\sqrt d\,R$, with equality exactly when the extreme point of $K_1$ of largest norm has
equal-magnitude coordinates in the good basis --- as for the ball and the cube. Together with
Theorem~\ref{thm:vm}, the optimal regret on the unit ball therefore lies between $\Omega(\sqrt d)$
and $O(d)$.

\begin{remark}[Richer feedback]\label{rem:slack}
The proof never uses that the learner is denied the loss: the adversary fixes $\ip{\wstar}{u_j}$
only after seeing $p_j$, so the bound holds verbatim when $r_t$ is revealed at the end of each
round. The hardness is not informational but a guess-before-reveal budget.
\end{remark}

Under that richer feedback the bound is tight, so $\sqrt d\,R$ is the right target for the game as
stated too.

\begin{proposition}\label{prop:slack}
Suppose $r_t$ is revealed each round, and put $c_s:=r_s+\ip{p_s}{v_s}=\ip{\wstar}{v_s}$. The
minimum-norm rule $p_t=\argmin\{\|w\|:\ip{w}{v_s}=c_s,\ s<t\}$ satisfies $\Reg_T\le\sqrt d\,R$ for
every $T$ and every adversary. With Theorem~\ref{thm:lower} the value of this game is therefore
exactly $\sqrt d\,R$ for the ball and the cube.
\end{proposition}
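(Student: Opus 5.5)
The plan is to identify the minimum-norm rule with an orthogonal projection of $\wstar$, and then read the regret off a Gram--Schmidt decomposition. Let $V_t=\spn\{v_s:s<t\}$ and let $P_t$ be the orthogonal projector onto $V_t$. The feasible set $A_t=\{w:\ip{w}{v_s}=c_s,\ s<t\}$ is an affine subspace parallel to $V_t^{\perp}$, and it is nonempty since it contains $\wstar$. Its minimum-norm point is therefore the unique point of $A_t$ lying in $V_t$, namely $p_t=P_t\wstar$. Indeed, $P_t\wstar\in V_t$ and $\ip{P_t\wstar}{v_s}=\ip{\wstar}{P_tv_s}=\ip{\wstar}{v_s}=c_s$. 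The rule is computable because $c_s=r_s+\ip{p_s}{v_s}$ is known after round $s$, and for $t=1$ it gives $p_1=0$.

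With this identification the loss of round $t$ is
\[
  r_t=\ip{\wstar-p_t}{v_t}=\ip{(I-P_t)\wstar}{v_t}=\ip{(I-P_t)\wstar}{(I-P_t)v_t},
\]
using that $I-P_t$ is a symmetric idempotent. I then split the rounds into two kinds.

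\begin{itemize}
\item If $v_t\in V_t$, then $(I-P_t)v_t=0$, so $r_t=0$ and the round is free; the span also does not change.
\item Otherwise $V_{t+1}$ has dimension one larger than $V_t$, so there are at most $d$ such rounds, say $t_1<\dots<t_k$ with $k\le d$.
\end{itemize}

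For each such round put $e_{t_i}=(I-P_{t_i})v_{t_i}/\|(I-P_{t_i})v_{t_i}\|$. These are exactly the Gram--Schmidt vectors of the responses, so $e_{t_1},\dots,e_{t_k}$ are orthonormal. Since $e_{t_i}\perp V_{t_i}$, we have $\ip{(I-P_{t_i})\wstar}{e_{t_i}}=\ip{\wstar}{e_{t_i}}$. Together with $\|(I-P_{t_i})v_{t_i}\|\le\|v_{t_i}\|=1$, this gives $r_{t_i}\le|\ip{\wstar}{e_{t_i}}|$.

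Summing and applying Cauchy--Schwarz and Bessel's inequality,
\[
  \Reg_T=\sum_{i\le k}r_{t_i}\le\sum_{i\le k}|\ip{\wstar}{e_{t_i}}|\le\sqrt k\Bigl(\sum_i\ip{\wstar}{e_{t_i}}^{2}\Bigr)^{1/2}\le\sqrt d\,\|\wstar\|\le\sqrt d\,R,
\]
and this holds for every $T$ and every adversary.

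For the final sentence of the statement, I combine this upper bound with Theorem~\ref{thm:lower}. By Remark~\ref{rem:slack}, that lower bound holds verbatim when $r_t$ is revealed, and it equals $\sqrt d\,R$ for the ball and the cube. The value of the richer-feedback game is therefore exactly $\sqrt d\,R$.

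The only step needing care is the identification $p_t=P_t\wstar$: one must check that the minimizer is well defined (the feasible set is nonempty because it contains $\wstar$) and that it does not depend on which consistent point is used, since $P_t\wstar$ is determined by the data $c_s$ alone. Everything after that is Gram--Schmidt bookkeeping, so I expect no real obstacle.
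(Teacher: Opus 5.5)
Your proof is correct and follows the paper's argument. It uses the same identification of $p_t$ as the projection of $\wstar$ onto the span of past responses, the same observation that only the at most $d$ rounds enlarging that span can carry positive loss, and Cauchy--Schwarz over those rounds. Where the paper telescopes $\|x_{t+1}\|^{2}=\|x_t\|^{2}-r_t^{2}/\|z\|^{2}$ to get $\sum_t r_t^{2}\le R^{2}$, you read off the same fact as Bessel's inequality for the Gram--Schmidt vectors $e_{t_i}$, which is the identical computation.
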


\begin{proof}
Observing $r_t$ turns the round's information from a halfspace into the hyperplane
$\ip{\wstar}{v_t}=c_t$, so the feasible set is the affine subspace $\wstar+V_{t-1}^{\perp}$, where
$V_{t-1}=\spn(v_1,\dots,v_{t-1})$, and the rule --- computable from the observed $(v_s,c_s)$ alone
--- returns $p_t=\Pi_{V_{t-1}}\wstar$. Put $x_t=\wstar-p_t=\Pi_{V_{t-1}^{\perp}}\wstar$ and split
$v_t=u+z$ with $u\in V_{t-1}$ and $z\in V_{t-1}^{\perp}$. Then $r_t=\ip{x_t}{z}$, and if $z\ne0$
then $V_t=V_{t-1}\oplus\spn(z)$, so
\[
  \|x_{t+1}\|^{2}\;=\;\|x_t\|^{2}-\frac{r_t^{2}}{\|z\|^{2}}\;\le\;\|x_t\|^{2}-r_t^{2},
\]
using $\|z\|\le\|v_t\|=1$; if $z=0$ then $r_t=0$ and $x_{t+1}=x_t$, so the inequality holds anyway.
Telescoping gives $\sum_tr_t^{2}\le\|\wstar\|^{2}\le R^{2}$. Finally, $r_t>0$ forces $z\ne0$, hence
$\dim V_t>\dim V_{t-1}$, so at most $d$ rounds have $r_t>0$; Cauchy--Schwarz over those rounds gives
$\Reg_T\le\sqrt{d}\,R$.
\end{proof}

\subsection{Corruption Robustness}\label{sec:corruption}

The reduction \eqref{eq:reduction} needs the expert to be at least as good as the learner under the
expert's own criterion on \emph{every} round, and nothing guarantees that. We therefore let the
oracle lie, at a price. Write $x_+=\max(0,x)$ and $x_-=\max(0,-x)$.

\begin{definition}[$C$-corrupted oracle]\label{def:corrupt}
The oracle is \emph{$C$-corrupted} if it returns unit vectors $v_t$ subject only to
\[
  r_t\;=\;\ip{\wstar-p_t}{v_t}\;\ge\;-\epsilon_t,\qquad\epsilon_t\ge0,\qquad\sum_t\epsilon_t\;\le\;C,
\]
that is, $\sum_t(r_t)_-\le C$. The regret is $\Reg_T:=\sum_{t\le T}(r_t)_+$, the loss suffered on
the rounds whose feedback was legitimate.
\end{definition}

A corrupted round of inverse linear optimization is one on which the expert's own criterion ranks
the action it took \emph{below} the learner's recommendation. The count model, at most $C_0$
arbitrary rounds, is the case $C=O(C_0R)$, since the induction below keeps
$|r_t|\le D_ts_t\le\sqrt3R$. The algorithm itself is unchanged: the one sign-sensitive step of
Lemma~\ref{lem:contract} was $\frac{\tau}{s_t}r_t^{2}\le\tau D_tr_t$, and its absolute-value form
costs nothing.

\begin{lemma}[Corrupted contraction]\label{lem:contractC}
If $\tau D_t\le\alpha$ then
$\;D_{t+1}^{2}\le D_t^{2}-\alpha(r_t)_++3\alpha(r_t)_-+\alpha^{2}s_t^{2}$.
\end{lemma}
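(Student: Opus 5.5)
We follow the proof of Lemma~\ref{lem:contract} line by line, changing only the single step that used the sign of $r_t$. The expansion of the square in the $H_{t+1}$ metric is purely algebraic and holds for any sign of $r_t$:
\[
  D_{t+1}^{2}=\|p_t-\wstar\|_{H_{t+1}}^{2}-2\alpha r_t+\alpha^{2}\|g_t\|^{2}_{H_{t+1}^{-1}}
  \le D_t^{2}+\frac{\tau}{s_t}r_t^{2}-2\alpha r_t+\alpha^{2}s_t^{2},
\]
where $H_{t+1}^{-1}\preceq H_t^{-1}$ is again used for the last term.

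The sign-sensitive step is the bound on $\frac{\tau}{s_t}r_t^{2}$. We replace it by its absolute-value form. Cauchy--Schwarz in the $H_t$ metric gives $|r_t|\le D_ts_t$, hence
\[
  \frac{\tau}{s_t}r_t^{2}\le\tau D_t|r_t|\le\alpha|r_t|,
\]
using the hypothesis $\tau D_t\le\alpha$. Substituting,
\[
  D_{t+1}^{2}\le D_t^{2}+\alpha|r_t|-2\alpha r_t+\alpha^{2}s_t^{2}.
\]

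It remains to split into cases by the sign of $r_t$, writing $|r_t|=(r_t)_++(r_t)_-$ and $r_t=(r_t)_+-(r_t)_-$. Then
\[
  \alpha|r_t|-2\alpha r_t=\alpha(r_t)_++\alpha(r_t)_--2\alpha(r_t)_++2\alpha(r_t)_-=-\alpha(r_t)_++3\alpha(r_t)_-,
\]
which is exactly the claimed inequality. Only one of $(r_t)_\pm$ is nonzero, so the two cases $r_t\ge0$ (which recovers Lemma~\ref{lem:contract}) and $r_t<0$ can each be read off directly.

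There is no real obstacle here. The only care needed is to confirm that nothing else in the original proof used $r_t\ge0$. The expansion is an identity, and the monotonicity $H_{t+1}\succeq H_t$ does not depend on the sign of $r_t$, since the rank-one update is positive semidefinite regardless. Finally, $s_t>0$ always holds, so dividing by $s_t$ remains legitimate.
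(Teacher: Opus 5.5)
Your proposal is correct and is essentially the paper's own proof. Both expand $D_{t+1}^{2}$ as in Lemma~\ref{lem:contract}, bound the rank-one term via the sign-free Cauchy--Schwarz estimate $|r_t|\le D_ts_t$ to get $\frac{\tau}{s_t}r_t^{2}\le\alpha|r_t|$, and finish with the identity $|r|-2r=-(r)_++3(r)_-$. Your case split is unnecessary, since that identity is exact, but it does no harm.
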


\begin{proof}
As in Lemma~\ref{lem:contract},
$D_{t+1}^{2}=D_t^{2}+\frac{\tau}{s_t}r_t^{2}-2\alpha r_t+\alpha^{2}\|g_t\|^{2}_{H_{t+1}^{-1}}$ with
$\|g_t\|^{2}_{H_{t+1}^{-1}}\le s_t^{2}$. Cauchy--Schwarz in the $H_t$ metric gives $|r_t|\le D_ts_t$
\emph{with no sign assumption}, so $\frac{\tau}{s_t}r_t^{2}\le\tau D_t|r_t|\le\alpha|r_t|$. Hence
$D_{t+1}^{2}\le D_t^{2}+\alpha(|r_t|-2r_t)+\alpha^{2}s_t^{2}$, and $|r|-2r=-(r)_++3(r)_-$.
\end{proof}

Lemma~\ref{lem:var} needs no revision: it was proved for \emph{arbitrary} unit vectors $g_t$, so the
budget $\sum_ts_t^{2}\le4d/\tau$ survives verbatim and only the tuning must absorb the corruption.

\begin{theorem}\label{thm:corrupt}
Let $\Delta=\sqrt3R$ and, given any upper bound $C$ on the corruption, run VM$(\alpha,\tau)$ with
$\alpha=\min\bigl\{\frac{R}{4\sqrt3\,d},\frac{R^{2}}{3C}\bigr\}$ and $\tau=\alpha/\Delta$. Then
$\Reg_T\le8\sqrt3\,dR+6C\le14\,dR+6C$ for every $T$, every $\wstar\in K_1$ and every $C$-corrupted
oracle.
\end{theorem}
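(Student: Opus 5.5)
The proof mirrors that of Theorem~\ref{thm:vm}. Lemma~\ref{lem:contractC} replaces Lemma~\ref{lem:contract}, and Lemma~\ref{lem:var} is used unchanged, since it holds for arbitrary unit vectors $g_t$. First, check that the tuning is admissible. We have $\tau=\alpha/\Delta\le\frac{R}{4\sqrt3\,d}\cdot\frac{1}{\sqrt3R}=\frac{1}{12d}\le1$, so Lemma~\ref{lem:var} gives $\sum_ts_t^{2}\le 4d/\tau=4d\Delta/\alpha$. Hence $\alpha^{2}\sum_ts_t^{2}\le4d\alpha\Delta=4\sqrt3\,d\alpha R$. If $C=0$, read $R^{2}/(3C)=\infty$; the argument below still goes through.

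The main step is an induction showing $D_t\le\Delta=\sqrt3R$ for all $t$. This is exactly what makes the precondition $\tau D_t\le\alpha$ of Lemma~\ref{lem:contractC} hold, because $\tau D_t\le\tau\Delta=\alpha$. The base case is $D_1=\|\wstar\|\le R$. For the inductive step, assume the bound for all $k\le t$ and telescope Lemma~\ref{lem:contractC} over these steps. Drop the nonpositive term $-\alpha\sum(r_k)_+$ and use $\sum_k(r_k)_-\le C$. This gives
\[
  D_{t+1}^{2}\;\le\;R^{2}+3\alpha C+4\sqrt3\,d\alpha R .
\]
Each of the two minimands in $\alpha$ controls one term. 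The bound $\alpha\le R^{2}/(3C)$ gives $3\alpha C\le R^{2}$. The bound $\alpha\le R/(4\sqrt3\,d)$ gives $4\sqrt3\,d\alpha R\le R^{2}$. Altogether $D_{t+1}^{2}\le3R^{2}=\Delta^{2}$, which closes the induction. The only delicate point is this choice of constants: the corruption term $3\alpha C$ must fit within the slack between $D_1^{2}\le R^{2}$ and $\Delta^{2}=3R^{2}$, and the two-part minimum in $\alpha$ is designed to absorb it.

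With the precondition established at every step, telescope once more while keeping the regret term, and drop $D_{T+1}^{2}\ge0$:
\[
  \alpha\sum_{t\le T}(r_t)_+\;\le\;R^{2}+3\alpha C+4\sqrt3\,d\alpha R .
\]
Dividing by $\alpha$ gives $\Reg_T\le R^{2}/\alpha+3C+4\sqrt3\,dR$. Since $R^{2}/\alpha=\max\{4\sqrt3\,dR,\,3C\}\le4\sqrt3\,dR+3C$, we obtain $\Reg_T\le8\sqrt3\,dR+6C\le14\,dR+6C$, uniformly in $T$.
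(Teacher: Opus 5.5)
Your proof is correct and follows the paper's argument essentially line for line. You make the same admissibility check $\tau\le\frac{1}{12d}\le1$, run the same induction $D_t\le\Delta$ closed by the two arms of the minimum defining $\alpha$, and finish with the same telescoping using $R^{2}/\alpha=\max\{4\sqrt3\,dR,3C\}$, with your explicit handling of $C=0$ being a harmless addition the paper leaves implicit.
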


\begin{proof}
First $\tau=\alpha/(\sqrt3R)\le\frac{1}{12d}\le1$, so Lemma~\ref{lem:var} applies. We show
$D_t\le\Delta$ by induction; at $t=1$, $D_1=\|\wstar\|\le R\le\Delta$. If $D_k\le\Delta$ for $k\le t$
then $\tau D_k\le\tau\Delta=\alpha$, so Lemma~\ref{lem:contractC} applies at each such step, and
telescoping while dropping $-\alpha(r_k)_+\le0$,
\[
  D_{t+1}^{2}\;\le\;R^{2}+3\alpha\!\!\sum_{k\le t}(r_k)_-+\alpha^{2}\frac{4d}{\tau}
  \;\le\;R^{2}+3\alpha C+4\sqrt3\,d\alpha R\;\le\;3R^{2}=\Delta^{2},
\]
the last two terms being at most $R^{2}$ each by the two arms of the minimum defining $\alpha$.
Telescoping once more and keeping the regret term gives
$\Reg_T\le\frac{R^{2}}{\alpha}+3C+4\sqrt3\,dR$, and
$\frac{R^{2}}{\alpha}=\max\{4\sqrt3\,dR,3C\}\le4\sqrt3\,dR+3C$.
\end{proof}

Only the \emph{scale} of $(\alpha,\tau)$ changes, their ratio staying $\sqrt3R$. The dependence on
$C$ is optimal against the $\Omega(\sqrt d\,R+C)$ lower bound (Theorem~\ref{thm:lower} with the
trivial $\Omega(C)$). Through \eqref{eq:transfer} with $K_1=\Ball$, a $C$-corrupted expert leaves
the induced learner with $R_T\le\tilde R_T=O(d+C)$ for every $T$. Some robustness is even
free: merely halving $(\alpha,\tau)$ in Theorem~\ref{thm:vm} absorbs any $C=O(dR)$
with no knowledge of $C$ at all.

\paragraph{Comparison.} The corruption model in this form descends from Lykouris, Mirrokni and Paes
Leme \cite{LMPL18} for stochastic bandits, and was brought to contextual search by Krishnamurthy,
Lykouris, Podimata and Schapire \cite{KLPS23}, who obtained $O(d^{3}\log^{3}T+C\log^{2}T)$ for the
symmetric loss. Paes Leme, Podimata and Schneider \cite{PPS26} then reached $O(d\log T+C)$ there and
an optimal $O(d\log(1/\varepsilon)+C)$ for the $\varepsilon$-ball loss, asking in closing whether
$O(d+C)$ is attainable. Theorem~\ref{thm:corrupt} is the analogous $O(dR+C)$ for the game of
Section~\ref{sec:intro}, uniform in $T$; robustness to a suboptimal expert has also been studied in
inverse linear optimization itself \cite{SakaueONS,OkiSakaue}. The catch is that both \cite{KLPS23}
and \cite{PPS26} are agnostic to $C$, whereas our tuning needs an upper bound on the budget.

That dependence is not an artifact: the finiteness of $\sum_ts_t^{2}$ --- the very fact that buys
$O(d)$ in place of $O(d\log T)$ --- caps the distance $\alpha\sum_ts_t^{2}\le4d\alpha/\tau$ the query
can travel over its whole lifetime, and no step-size schedule removes the cap, so an adversary that
pushes it past the remaining budget may then answer honestly forever at a loss linear in $T$. We
leave it as an open problem to obtain $O(dR+C)$ agnostically, with no upper bound on $C$ given to
the learner; by the above, it cannot come from retuning alone.

\subsection{Low-Rank Responses}\label{sec:lowrank}

Nothing forces the oracle to explore all of $\R^d$. Write $V_t=\spn(v_1,\dots,v_t)$ and
$k=\dim V_T$; if the action sets lie in a common low-dimensional affine subspace, say, then $k$ is
small and we should expect a bound of order $k$. The dimension enters Section~\ref{sec:alg} in two
places only. In the first --- the initial value $\tr(H_1^{-1/2})=d$ of the potential --- adaptivity
is free.

\begin{lemma}[Rank-adaptive step-size bound]\label{lem:varlow}
For every $\tau\in(0,1]$ and every $T$, $\ \sum_{t\le T}s_t^{2}\le\frac{4}{\tau}\,d_T$, where
$d_T:=d-\tr\bigl(H_{T+1}^{-1/2}\bigr)$ satisfies $d_T\le\min(k,d)$.
\end{lemma}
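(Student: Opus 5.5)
The bound on $\sum_t s_t^2$ is exactly the telescoping argument of Lemma~\ref{lem:var}, read without discarding the final value of the potential. The proof of that lemma shows, for every round and for arbitrary unit vectors $g_t$ and any $\tau\in(0,1]$,
\[
  \tr\bigl(H_t^{-1/2}\bigr)-\tr\bigl(H_{t+1}^{-1/2}\bigr)\;\ge\;\tfrac{\tau}{4}s_t^{2}.
\]
Summing over $t\le T$ gives $\frac{\tau}{4}\sum_{t\le T}s_t^2\le \tr(H_1^{-1/2})-\tr(H_{T+1}^{-1/2})=d-\tr(H_{T+1}^{-1/2})=d_T$. Previously the last trace was simply bounded below by $0$. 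Here it is kept, which yields the first claim. This bound also gives $d_T\ge0$.

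It remains to show $d_T\le\min(k,d)$. The bound $d_T\le d$ is immediate, since $H_{T+1}\succ0$ makes the trace nonnegative. For $d_T\le k$, I will use the structure of the updates. Every rank-one increment $\frac{\tau}{s_t}g_tg_t^\top$ has $g_t=-v_t\in V_T$. Hence
\[
  H_{T+1}=I+M,
\]
where $M\succeq0$ has range contained in $V_T$ and vanishes on $V_T^{\perp}$. Consequently $H_{T+1}$ is block diagonal with respect to $\R^d=V_T\oplus V_T^{\perp}$, and it acts as the identity on the $(d-k)$-dimensional subspace $V_T^\perp$. Its inverse square root therefore also acts as the identity there, contributing exactly $d-k$ to the trace. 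On $V_T$ the matrix $I+M\succeq I$, so every eigenvalue of $H_{T+1}^{-1/2}$ restricted to $V_T$ lies in $(0,1]$, contributing a nonnegative amount. Thus $\tr(H_{T+1}^{-1/2})\ge d-k$, that is, $d_T\le k$.

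The only step needing care is the block decomposition: I must check that $V_T^\perp$ is invariant under $H_{T+1}$, and hence under its functional calculus. This holds because $M$ is symmetric with range in $V_T$, so $Mx=0$ for every $x\perp V_T$. The matrix $H_{T+1}$ then commutes with the orthogonal projection onto $V_T$, and the eigenvalues split accordingly.

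Everything else is bookkeeping inherited from Lemma~\ref{lem:var}. Note that no knowledge of $k$ enters the algorithm; only the analysis uses it.
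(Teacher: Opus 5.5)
Your proposal is correct and follows the paper's proof essentially verbatim. You telescope the per-round drop $\tr(H_t^{-1/2})-\tr(H_{t+1}^{-1/2})\ge\frac{\tau}{4}s_t^{2}$ from Lemma~\ref{lem:var} while keeping the final trace, and you bound $\tr(H_{T+1}^{-1/2})\ge d-k$ by writing $H_{T+1}=I+M$ with $\range(M)\subseteq V_T$, so that $H_{T+1}$ acts as the identity on the invariant subspace $V_T^{\perp}$ (your symmetry argument for that invariance is the step the paper leaves implicit).
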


\begin{proof}
The proof of Lemma~\ref{lem:var} bounds each drop $\tr(H_t^{-1/2})-\tr(H_{t+1}^{-1/2})$ below by
$\frac{\tau}{4}s_t^{2}$; telescoping from $\tr(H_1^{-1/2})=d$ gives the bound, and $d_T\le d$ since
$H_{T+1}^{-1/2}\succ0$. Writing $H_{T+1}=I+M$ with $M\succeq0$ and $\range(M)\subseteq V_T$, the
subspace $V_T^{\perp}$ is invariant and $H_{T+1}$ acts on it as the identity, so $d-k$ eigenvalues of
$H_{T+1}^{-1/2}$ equal $1$ and the rest are positive: $\tr(H_{T+1}^{-1/2})\ge d-k$.
\end{proof}

The quantity $d_T$ is a soft rank: at most $k$, smaller when the responses merely cluster near a
subspace, and computable from $H_{T+1}$. Theorem~\ref{thm:vm} used Lemma~\ref{lem:var} only through
the bound on $\sum_ts_t^{2}$, so its proof now gives more.

\begin{corollary}\label{cor:tunedm}
If $k\le m$ then VM$\bigl(\frac{R}{2\sqrt3\,m},\frac{1}{6m}\bigr)$ satisfies $\Reg_T\le6\sqrt3\,mR$
for every $T$.
\end{corollary}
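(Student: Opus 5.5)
The plan is to rerun the proof of Theorem~\ref{thm:vm} verbatim with $d$ replaced by $m$. The only place that proof used the dimension was in bounding $\sum_k s_k^{2}$ by Lemma~\ref{lem:var}. Lemma~\ref{lem:varlow} supplies the replacement bound $\sum_{k\le t}s_k^{2}\le\frac{4}{\tau}d_t$ for every $t$, where $d_t\le\min(k_t,d)$ and $k_t=\dim V_t$. Since $V_t\subseteq V_T$ for $t\le T$, we have $k_t\le k\le m$. So for every $t\le T$,
\[
  \sum_{k'\le t}s_{k'}^{2}\;\le\;\frac{4m}{\tau}\;=\;24m^{2}
  \qquad\text{with } \tau=\tfrac{1}{6m}\le1 .
\]
The hypothesis $\tau\le1$ of Lemma~\ref{lem:varlow} holds because $m\ge1$. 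If $m$ exceeds $d$, the bound is weaker than Theorem~\ref{thm:vm}'s but still valid.

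The first step is the induction $D_t\le\sqrt3R$. At $t=1$ we have $D_1=\|\wstar\|\le R$. If $D_j\le\sqrt3R$ for all $j\le t$, then $\tau D_j\le\frac{\sqrt3R}{6m}=\frac{R}{2\sqrt3\,m}=\alpha$, so Lemma~\ref{lem:contract} applies at each such step. Lemma~\ref{lem:contract} holds for arbitrary unit $g_t$ and involves no dimension. Telescoping and dropping the regret terms then gives
\[
  D_{t+1}^{2}\le R^{2}+\alpha^{2}\sum_{j\le t}s_j^{2}\le R^{2}+\frac{R^{2}}{12m^{2}}\cdot24m^{2}=3R^{2}.
\]

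The second step telescopes once more while keeping the loss terms. This gives $\alpha\sum_{t\le T}r_t\le D_1^{2}+\alpha^{2}\cdot24m^{2}\le3R^{2}$, hence $\Reg_T\le 3R^{2}/\alpha=6\sqrt3\,mR$.

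The only point needing care is that the induction runs over intermediate times $t<T$, while the hypothesis is stated for $k=\dim V_T$. This is handled by the monotonicity $\dim V_t\le\dim V_T$ noted above, or equivalently by noting that $d_t$ is nondecreasing in $t$. Beyond that there is no real obstacle; the corollary is a bookkeeping substitution.
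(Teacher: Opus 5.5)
Your proposal is correct and is essentially the paper's own proof: rerun the argument of Theorem~\ref{thm:vm} with $d$ replaced by $m$, using the budget $4k/\tau\le4m/\tau=24m^{2}$ from Lemma~\ref{lem:varlow} in place of Lemma~\ref{lem:var}. The paper leaves implicit that the induction needs this budget at intermediate times $t<T$; you make it explicit via $\dim V_t\le\dim V_T$, though $\sum_{j\le t}s_j^{2}\le\sum_{j\le T}s_j^{2}$ gives it just as directly.
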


\begin{proof}
Verbatim the proof of Theorem~\ref{thm:vm}: the budget is now $4k/\tau\le4m/\tau=24m^{2}$, so the
induction closes at $D_t^{2}\le R^{2}+\frac{R^{2}}{12m^{2}}\cdot24m^{2}=3R^{2}$ and
$\Reg_T\le3R^{2}/\alpha=6\sqrt3\,mR$.
\end{proof}

This is no accident. Since $\range(H_{t+1}-I)\subseteq V_t$, the subspace $V_t$ is
$H_{t+1}$-invariant, hence so is $H_{t+1}^{-1}$, so $H_{t+1}^{-1}g_t\in V_t$ and, with $p_1=0$,
$p_t\in V_{t-1}$ for every $t$. The metric stays the identity on $V_t^{\perp}$ and the query never
leaves $V_{t-1}$, so a run against rank-$k$ responses \emph{is} a run of the same algorithm inside
$V_T\cong\R^{k}$ against the hidden point $\Pi_{V_T}\wstar$, of norm at most $R$.

Only the tuning knows $d$, and there the dependence is real. Take $K_1=R\Ball$, $\wstar=Re_1$ and
$v_t=e_1$ for every $t$ --- legal whenever $\ip{p_t}{e_1}\le R$, and of rank $1$. The bounds
$H_t\succeq I$ and $s_t\le1$ give
$\|H_{t+1}^{-1}g_t\|\le\|H_{t+1}^{-1/2}\|\,\|g_t\|_{H_{t+1}^{-1}}\le1$, so the query moves by at most
$\alpha$ per round, $r_t\ge R-\alpha t$, and the first $R/2\alpha$ rounds cost at least $R/2$ each:
$\Reg\ge R^{2}/(4\alpha)=\Omega(dR)$ at the tuning of Theorem~\ref{thm:vm}. A step of order
$R/d$ needs order $d$ rounds merely to cross the initial distance $R$, whatever the rank.

The tuning must therefore adapt, and it can, because the learner \emph{observes} $\dim V_t$. We
double the guess and restart whenever the guess is reached.

\begin{quote}
\textbf{Algorithm VM-A.} For $j=0,1,2,\dots$ put $\kappa_j=\min(2^{j},d)$ and run a fresh copy of
VM$\bigl(\frac{R}{2\sqrt3\,\kappa_j},\frac{1}{6\kappa_j}\bigr)$ from $p=0$, $H=I$, until the
responses received during phase $j$ span $\kappa_j$ dimensions --- if $\kappa_j=d$, forever.
\end{quote}

\begin{theorem}\label{thm:adaptive}
VM-A satisfies $\Reg_T\le6\sqrt3\min(4k,3d)\,R\le42\,kR$ for every $T$, every $\wstar\in K_1$ and
every strong separation oracle, without prior knowledge of $k$.
\end{theorem}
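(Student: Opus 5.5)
Each phase is handled by Corollary~\ref{cor:tunedm}, and the costs are summed over a geometric sequence of phases. Phases are indexed by $j=0,1,\dots,J$. Phase $j$ runs a fresh copy of VM with $m=\kappa_j$ against the same hidden $\wstar\in K_1$, so the game restricted to that phase is a legal instance of the cutting-plane game with initial query $0$ and metric $I$. The responses in a phase are a subset of all responses, so they span at most $\min(\kappa_j,k)$ dimensions. The only subtlety is the round that ends a phase. Phase $j<J_{\max}$ stops once its responses span $\kappa_j$ dimensions, so throughout the phase, including that final round, the span has dimension at most $\kappa_j$. Corollary~\ref{cor:tunedm} (its proof needs only $d_T\le k_{\text{phase}}\le m$, through Lemma~\ref{lem:varlow}) therefore bounds the regret of phase $j$ by $6\sqrt3\,\kappa_jR$, uniformly in the phase's length. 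The same holds for the last, possibly unfinished phase, and for the phase with $\kappa_j=d$, which is covered by Theorem~\ref{thm:vm}.

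Next I bound which phases can occur. Each completed phase $j$ collected responses spanning $\kappa_j$ dimensions, all inside $V_T$, so $\kappa_j\le k$ for every completed phase. If the final phase has index $J$, then phase $J-1$ completed, so $2^{J-1}\le k$, that is $2^J\le2k$. I also need $2^{J}\le 2k$ in the form $\kappa_J\le\min(2k,d)$. Summing,
\[
  \Reg_T\le6\sqrt3\,R\sum_{j\le J}\kappa_j\le6\sqrt3\,R\,(2^{J+1}-1)\le6\sqrt3\,R\cdot4k .
\]
For the $3d$ arm, let $J^*$ be the first index with $2^{J^*}\ge d$, so that $\kappa_{J^*}=d$ and no phase follows it. The sum is then at most $\sum_{j<J^*}2^j+d<2^{J^*}+d$. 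Since $2^{J^*-1}<d$, we have $2^{J^*}<2d$, and the sum is below $3d$. Taking both arms gives $6\sqrt3\min(4k,3d)R$, and $6\sqrt3\cdot4\approx41.6\le42$.

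The main obstacle I expect is justifying that each phase really satisfies the corollary's hypothesis $k\le m$ at every prefix, given that the stopping rule triggers only after the $\kappa_j$-th dimension arrives. I would handle it by noting that Lemma~\ref{lem:varlow} uses the rank of responses up to the current round, which never exceeds $\kappa_j$ inside phase $j$, and that the induction in Theorem~\ref{thm:vm} is prefix-wise, so it needs the budget only on each prefix. A minor bookkeeping point is the edge case $k=0$, where all $v_t$ vanish. That case cannot occur because every $v_t$ is a unit vector, so $k\ge1$ whenever $T\ge1$.
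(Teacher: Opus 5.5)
Your proof is correct and follows the paper's argument: each phase is a fresh legal instance with $D_1\le R$, Corollary~\ref{cor:tunedm} bounds it by $6\sqrt3\,\kappa_jR$, and a completed phase forces $\kappa_j=2^j\le k$ (a phase with $\kappa_j=d$ never completes), after which the geometric sum is bounded. The only difference is cosmetic: you bound the $4k$ and $3d$ arms separately, where the paper bounds $2^J-1+\kappa_J$ in one step using $\kappa_J\le\min(2k,d)$. Your explicit remarks on the prefix-wise use of the rank budget and on $k\ge1$ supply details the paper leaves implicit.
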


\begin{proof}
A phase queries its own iterates, so the responses it receives are legal for it, and it starts at
$p=0$, $H=I$ with $D_1=\|\wstar\|\le R$: phase $j$ is an instance of the game whose responses span at
most $\kappa_j$ dimensions, and Corollary~\ref{cor:tunedm} bounds its regret by $6\sqrt3\kappa_jR$. A
phase ends only if its own responses span $\kappa_j$ dimensions, which forces $k\ge\kappa_j$; and only the
last phase can have $\kappa_j=d$, so $\kappa_j=2^{j}$ for $j<J$, where $J$ indexes the last phase
started. Then $2^{J-1}\le k$ and $2^{J-1}<d$, whence $\kappa_J\le\min(2k,d)$ and
$\sum_{j\le J}\kappa_j=2^{J}-1+\kappa_J\le\min(4k,3d)$. Sum the phase bounds; $24\sqrt3\le42$.
\end{proof}

Both features of the scheme are forced. The restart must reset the \emph{query}, since each phase
has to begin within distance $R$ of $\wstar$ and $p=0$ is the only point known to do so. Keeping $p$
and resetting only $H$ leaves $D_1\le\sqrt3R$ (Remark~\ref{rem:cost}) and costs a constant factor
per phase; resetting $p$ while keeping $H$ leaves $D_1=\|\wstar\|_{H}$, unbounded. The ratio $2$ is
optimal as well: with ratio $\theta$ the guesses sum to $\frac{\theta^{2}}{\theta-1}k$, least at
$\theta=2$.

\begin{corollary}
If the sets $X_t$ lie in a common affine subspace of dimension $k$ --- more generally, if
$\spn\{x_t-\hat x_t\}$ has dimension $k$ --- then VM-A through the reduction of
Section~\ref{sec:intro} gives $R_T\le\tilde R_T\le84\,k$ for every $T$: the responses
$v_t\propto x_t-\hat x_t$ have rank $k$, and $R=1$.
\end{corollary}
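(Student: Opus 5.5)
The plan is to chain the reduction \eqref{eq:transfer} with Theorem~\ref{thm:adaptive}, applied to the cutting-plane game with $K_1=\Ball$, so that $R=1$. First I fix the induced learner. We run VM-A as the cutting-plane learner $\mathcal{L}$, play $\hat x_t\in\argmax_{x\in X_t}\ip{\hat w_t}{x}$ with $\hat w_t=p_t$ (any fixed nonzero vector when $p_t=0$), and skip rounds with $\delta_t=0$. On the remaining rounds we feed $v_t=\delta_t/\|\delta_t\|$ back to VM-A.

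The first step is legality. As shown in Section~\ref{sec:intro}, optimality of $x_t$ for $\wstar$ and of $\hat x_t$ for $\hat w_t$ gives $\ip{\wstar-p_t}{v_t}\ge0$, so VM-A faces a genuine strong separation oracle. The case $p_t=0$ is harmless because there $\ip{p_t}{\delta_t}=0$. Theorem~\ref{thm:adaptive} therefore applies, with $k$ equal to the dimension of the span of the responses actually fed to VM-A.

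The second step bounds that rank. Every fed response is a positive multiple of $x_t-\hat x_t$, so $k\le\dim\spn\{x_t-\hat x_t\}$, and this is the general hypothesis. For the special hypothesis, suppose all $X_t$ lie in a common affine subspace $a+W$ with $\dim W=k$. Then each difference $x_t-\hat x_t$ lies in $W$, so the span again has dimension at most $k$.

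Next comes the numerical bound. Theorem~\ref{thm:adaptive} gives $\Reg_T\le42\,kR=42\,k$. Feeding this into \eqref{eq:transfer}, with $\|\delta_t\|\le2$ because $x_t,\hat x_t\in\Ball$, yields $R_T\le\tilde R_T\le2\Reg_T\le84\,k$ for every $T$.

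The one point needing care is the transfer between the skipped rounds and the rank $k$. Rounds with $\delta_t=0$ contribute nothing to $R_T$ or $\tilde R_T$, and they are never shown to VM-A, so they do not enter $\Reg_T$. The bound $k$ refers only to the responses VM-A actually receives, and those form a subset of the normalized nonzero differences. The span hypothesis therefore controls exactly the quantity Theorem~\ref{thm:adaptive} needs. No other obstacle is expected; the corollary is a direct composition of earlier results.
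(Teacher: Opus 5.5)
Your proposal is correct and follows the paper's own argument. The paper applies Theorem~\ref{thm:adaptive} on $K_1=\Ball$, so $R=1$ and $\Reg_T\le42k$, because the fed responses are positive multiples of the differences $x_t-\hat x_t$ (which lie in the direction space of the common affine subspace) and so span at most $k$ dimensions; the factor $2$ of \eqref{eq:transfer} then gives $84k$.

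One fine point is shared with the paper. At rounds with $p_t=0$ (the first active round of each VM-A phase), the substitute $\hat w_t\neq p_t$ turns the equality in \eqref{eq:transfer} into an inequality, off by the nonnegative term $-\ip{\hat w_t}{\delta_t}\le2\|\hat w_t\|$. This does not affect $R_T\le84k$, and it disappears from $\tilde R_T$ if the latter is measured with $p_t$ in place of $\hat w_t$.
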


\section{Application to Convex Optimization}\label{sec:convex}

We minimize a convex function from the \emph{directions} of its subgradients alone, with a total
suboptimality that stays bounded over an infinite run.

Let $f:\R^d\to\R$ be convex with a minimizer $\xstar$, $\|\xstar\|\le R$, queried through a
first-order oracle: the learner asks $p_t$ and receives a subgradient $\nabla f(p_t)$. Since
$\ip{\nabla f(p_t)}{p_t-\xstar}\ge f(p_t)-f(\xstar)\ge0$, the normalized direction
$v_t=-\nabla f(p_t)/\|\nabla f(p_t)\|$ is a legal response in the game of Section~\ref{sec:intro}
--- chosen after the query, as a strong oracle may --- with loss
$\ip{\xstar-p_t}{v_t}\ge\bigl(f(p_t)-f(\xstar)\bigr)/\|\nabla f(p_t)\|$. If the oracle ever returns
$\nabla f(p_t)=0$ then $p_t$ already minimizes $f$ and the learner stays there, so every later round
is free and the bounds below are unaffected; we may thus assume $\nabla f(p_t)\neq0$.
Theorem~\ref{thm:main}, with
$\wstar=\xstar$ and $K_1=R\Ball$, therefore bounds $\sum_{t\le T}(f(p_t)-f(\xstar))/\|\nabla f(p_t)\|$
by $O(dR)$ for every $T$, with no Lipschitz assumption. If $f$ is $L$-Lipschitz then
$\|\nabla f\|\le L$, and with $\bar p_T=\frac1T\sum_{t\le T}p_t$ convexity gives
\begin{equation}\label{eq:convex}
  \sum_{t\le T}\bigl(f(p_t)-f(\xstar)\bigr)\;\le\;11\,dLR
  \qquad\text{and}\qquad
  f(\bar p_T)-f(\xstar)\;\le\;\frac{11\,dLR}{T}.
\end{equation}
The left bound is finite at $T=\infty$: the total suboptimality ever incurred is bounded, not merely
its average. The method reads only subgradient \emph{directions} --- it never evaluates $f$, and
needs neither $L$ nor $f(\xstar)$ --- and costs $O(d^{2})$ per round. Its iterates stay in a ball of
radius $O(R)$ (Remark~\ref{rem:cost}). To minimize over a convex $K\ni\xstar$ we insert the
projection $\Pi^{H_{t+1}}_{K}$, nonexpansive in $\|\cdot\|_{H_{t+1}}$, which leaves
Lemma~\ref{lem:contract} intact.

Against subgradient descent the trade is $dLR/T$ for $LR/\sqrt T$: \eqref{eq:convex} is better once
$T\gtrsim d^{2}$, and below $T=d$ nothing improves on $1/\sqrt T$ \cite{NY,Nesterov}. Cutting-plane
methods --- center of gravity and ellipsoid \cite{NY}, Vaidya's method \cite{Vaidya} and its
successors \cite{LSW} --- are
incomparably stronger in $T$, reaching accuracy $\epsilon$ within $O(d\ln(LR/\epsilon))$ queries, but
their guarantee is for the \emph{best} iterate. The queries themselves need not be good, and singling
out the good one requires the values $f(p_t)$, that is, a zero-order oracle. Guarantees for the
average come instead from the algorithms of Table~\ref{tab:oilo}, each of which transfers through the
reduction above, replacing $O(dLR/T)$ in \eqref{eq:convex} by $O(d^{4}LR\ln T/T)$
\cite{Besbes21,Besbes25}, by $O(dLR\ln T/T)$ \cite{GGKMPS,SakaueONS}, or by $\exp(O(d\ln d))\,LR/T$
\cite{GGKMPS}. Ours is the first constant at once polynomial in $d$ and free of $\ln T$.

\end{document}